\definecolor{light-gray}{gray}{0.93}
\definecolor{semilight-gray}{gray}{0.8}
\newtheoremstyle{defiStyle}  
  {3pt} {3pt} {\normalfont} {} {\bfseries\itshape} {:} { }  
  {\thmname{#1} \thmnumber{#2}\textit{\thmnote{ (#3)}}}  
\theoremstyle{defiStyle}  
\newtheorem{defi}{Definition}  
\theoremstyle{thmStyle}
\newtheorem{thm}{Theorem}  
\newtheorem{lemma}{Lemma}  
\newtheorem{remark}{Remark}  
\newtheorem{prob}{Problem}
\newtheorem{corollary}{Corollary}
\renewenvironment{proof}[1][\proofname]{%
  \par\pushQED{\qed}%
  \normalfont\topsep6pt \trivlist\item[\hskip\labelsep\itshape#1.]\ignorespaces
}{%
  \hfill\QED\endtrivlist\@endpefalse
}
\renewcommand{\proofname}{\textit{Proof}}
\newcommand{\mdspace}{$\mathcal{M}_\mathcal{D}$-space}
\newcommand{\cobs}{CO}
\title{\LARGE \bf
Control Barrier Functions via Minkowski Operations \\ for Safe Navigation among Polytopic Sets
}
\author{Yi-Hsuan Chen$^{1}$, Shuo Liu$^{2}$, Wei Xiao$^{3}$, Calin Belta$^{4}$, Michael Otte$^{1}$
\thanks{$^{1}$Y. Chen and M. Otte are with the Dept. of Aerospace Engineering, University of Maryland, College Park, MD, USA. {\tt \{yhchen91, otte\}@umd.edu}}%
\thanks{$^{2}$S. Liu is with the Dept. Mechanical Engineering, Boston University, Brookline, MA, 02215, USA. {\tt liushuo@bu.edu}}%
\thanks{$^{3}$W. Xiao is with the Computer Science and Artificial Intelligence Lab, Massachusetts Institute of Technology, Cambridge, MA, USA. {\tt weixy@mit.edu}}
\thanks{$^{4}$C. Belta is with the Dept. Electrical and Computer Engineering and Dept. Computer Science, University of Maryland, College Park, MD, USA. {\tt calin@umd.edu}}
}
\newcommand{\removeForSpace}[1]{}
\begin{document}


\maketitle
\begin{abstract}
Safely navigating around obstacles while respecting the dynamics, control, and geometry of the underlying system is a key challenge in robotics. Control Barrier Functions (CBFs) generate safe control policies by considering system dynamics and geometry when calculating safe forward-invariant sets. Existing CBF-based methods often rely on conservative shape approximations, like spheres or ellipsoids, which have explicit and differentiable distance functions. In this paper, we propose an optimization-defined CBF that directly considers the exact Signed Distance Function (SDF) between a {\it polytopic} robot and {\it polytopic} obstacles.
Inspired by the Gilbert-Johnson-Keerthi (GJK) algorithm, we formulate both (i) minimum distance and \mbox{(ii) penetration} depth between polytopic sets as convex optimization problems in the space of Minkowski difference operations (the \mdspace). 
Convenient geometric properties of the \mdspace ~enable the derivatives of implicit SDF between two polytopes to be computed via differentiable optimization. 
%
%
%
%
We demonstrate the proposed framework in three scenarios including pure translation, initialization inside an unsafe set, and multi-obstacle avoidance. These three scenarios highlight the generation of a non-conservative maneuver, a recovery after starting in collision, and the consideration of multiple obstacles via pairwise CBF constraints, respectively. A video of all simulation animations can be found at
\textnormal{\url{https://youtu.be/3Dh0gtDW8bE}}
.
\end{abstract}
\section{Introduction}
\begin{figure}[ht!]
    \centering
    \includegraphics[width=0.9\linewidth]{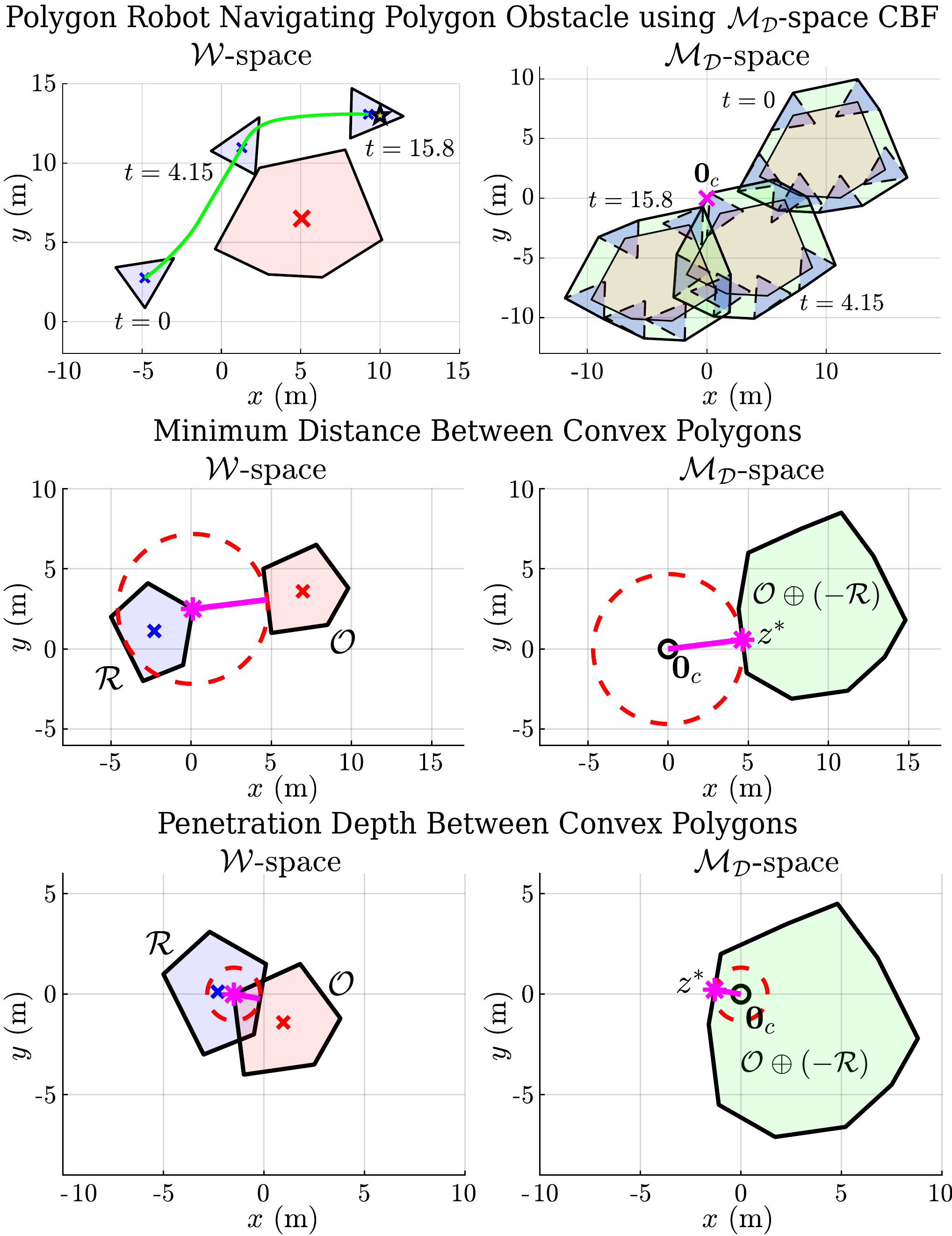}
    \caption{ {\bf Top:} A polytopic robot navigating around a polytopic obstacle in $\mathcal{W}$-space (Left) and the corresponding 
    Minkowski differences 
    in \mdspace~(Right). The proposed CBF method leverages the fact that signed distance between polytopes in $\mathcal{W}$-space can be derived in \mdspace. 
    {\bf Middle/Bottom:} Distances between disjoint (Middle) and intersecting (Bottom) convex sets in $\mathcal{W}$-space (left), and the same distances in \mdspace~(Right).
    In \mdspace~the origin $\bm{0}_c$ lies outside or inside of the Minkowski difference if the signed distance is positive or negative, respectively. $z^*$ is the critical point, whose derivative is used to enforce CBF constraints. 
    }
    \label{fig: sd_CSO}
\end{figure}
Safely navigating an environment with obstacles is a fundamental challenge in robotics. Navigational solutions must respect the dynamics and control constraints of the vehicle and the geometric constraints of the vehicle and obstacles. \textcolor{black}{Control Barrier Functions (CBFs) guarantee safety for control-affine systems by providing forward-invariant safe sets and combine naturally with CLFs, yielding a sequence of pointwise quadratic programs (QP) \cite{Ames.etal.ECC19,Ames.etal.TAC17}.} QPs are computationally efficient and well-suited to real-time applications. Further advancements in safety-critical control with CBFs have introduced formulations for high relative degree constraints \cite{Nguyen.Sreenath.ACC16, Xiao.Belta.TAC22, Liu.etal.23, 10886196,liu2025auxiliary}. 

CBFs are used for obstacle avoidance across a range of applications \cite{Agrawal.Sreenath.RSS17,Thyri.etal.CCTA20,Khazoom.etal.Humanoids22,Thirugnanam.etal.ICRA22,Tayal.etal.ACC24,Liu.Mao.arXiv24,Molnar.etal.TCST25}, yet most CBFs approximate robots and/or obstacles by smooth shapes, i.e., hyperspheres \cite{Chen.etal.TCST18} and ellipsoids \cite{Verginis.Dimarogonas.LCSS19,Funada.etal.TCST25}. Hyperspheres and ellipsoids are popular because they admit closed-form, differentiable robot-obstacle distance, but they are often overly conservative approximations of real-world objects; such objects are, for example, more accurately approximated by polytopes. 

In this paper we present a CBF-based framework that considers {\it polytopic} robots and {\it polytopic} obstacles directly. Our method computes the exact signed distance function (SDF) between two polytopes by solving a convex optimization problem in the dual space that is defined by the Minkowski difference between the two polytopes (see Fig.~\ref{fig: sd_CSO}). The resulting robot-to-obstacle SDF is differentiable almost everywhere (a.e.).

Our work builds on a foundation of prior work \cite{Thirugnanam.etal.ACC22, Thirugnanam.etal.arXiv23, Singletary.etal.RAL22,Peng.etal.ICRA23,Wei.etal.RAL24,Wu.etal.optfree.arxiv25} that considers continuous-time CBFs for collision avoidance between polytopes/polygons. The first study of CBFs \cite{Thirugnanam.etal.ACC22} uses the Minimum Distance Function (MDF), solved by duality-based convex optimization, to construct a nonsmooth CBF (NCBF). In \cite{Thirugnanam.etal.ACC22, Thirugnanam.etal.arXiv23} the exact derivative is applied in the case of strongly convex sets (but not polytopes, which are weakly convex), while a conservative lower bound on the CBF derivative is used for polytopes. A CBF-based filtering strategy for manipulators  \cite{Singletary.etal.RAL22}
uses the SDF and a local linear approximation of the CBF gradient (approximation error considered to ensure safety). An optimization-free smooth approx-SDF CBF is proposed in \cite{Wu.etal.optfree.arxiv25}, where conservatism is reduced via hyperparameter tuning. In \cite{Wei.etal.RAL24} CBFs are constructed using differentiable optimization of the minimum scaling factor (another metric for collision detection), and polygons are approximated by strictly convex shapes. Polygons are similarly approximated by polynomial shapes using logistic regression in \cite{Peng.etal.ICRA23}. Our work differs from prior work \cite{Thirugnanam.etal.ACC22, Thirugnanam.etal.arXiv23, Singletary.etal.RAL22,Peng.etal.ICRA23,Wei.etal.RAL24,Wu.etal.optfree.arxiv25}  in that we solve the SDF via convex optimization and compute its derivative using differentiable optimization \textit{without approximation} by leveraging the configuration obstacle defined via the Minkowski difference. Table \ref{table1} provides a conceptual comparison with existing works.

The Minkowski difference is a fundamental tool in computational geometry for collision detection. It was first used in the Gilbert-Johnson-Keerthi (GJK) algorithm \cite{gjk} to compute the distance
between convex sets, and later extended by the Expanding Polytope Algorithm (EPA) \cite{epa} to compute penetration depth. Due to its effectiveness in encoding robot's geometry and configuration with obstacles, it has been widely used in sampling-based \cite{Lien.08},~\cite{Ruan.etal.TRO23} and optimization-based motion planning \cite{Lutz.Meurer.CCTA21,Montaut.etal.RSS22,Guthrie.CDC22}. In this article, we refer to the space in which Minkowski difference lies as the ``\textit{Minkowski-difference-space}''\footnote{While some authors refer to \mdspace ~as ``configuration space,'' we use the term  ``Minkowski-difference-space'' to distinguish it from the separate ``configuration space'' ($\mathcal{C}$-space) typically considered in motion planning. It is also chosen to be similar to the ``collision space'' defined in \cite{Montaut.etal.RSS22}. The more succinct ``Minkowski space'' has a specific different meaning in physics.}, denoted as \mdspace. 

While CBFs have previously considered \mdspace~for collision avoidance \cite{Wu.etal.optfree.arxiv25}, prior work relies on an approximation of the SDF by its lower bound. 
The integration of exact SDF computation via convex optimization with CBFs remains an open problem. To the best of our knowledge, we are the first to bridge this gap by combining the Minkowski difference, convex optimization, and CBFs in a unified framework that considers exact safety constraints.

The main contributions of this paper are as follows:
\begin{itemize}
    \item We propose a novel approach to synthesize 
    CBFs for obstacle avoidance between polytopic sets using the exact signed distance. This is achieved by solving the distance problem in the \mdspace. 
    The derivatives are computed via differentiable optimization and are used to evaluate the derivatives of CBFs.
    \item We present an exact formulation for computing penetration depth as convex optimization. 
    This reformulation allows us to use SDF as a CBF, as both minimum distance and penetration depth share the same differentiable optimization framework.
    \item We demonstrate 
    the proposed method 
    on single-integrator and unicycle models navigating around obstacles, including scenarios with initial collisions to validate the penetration depth formulation.
\end{itemize}

\section{Preliminaries} \label{sec:preliminaries}
In this section we present background on CBFs (\ref{subsec:prelim_cbf}), Minkowski operations (\ref{subsec:prelim_minkowski}), and signed distance  (\ref{subsec:prelim_sd}).
\subsection{Control Barrier Functions}\label{subsec:prelim_cbf}
Consider a nonlinear affine control system:
\begin{align}
    \dot{\bm{x}}=f(\bm{x})+g(\bm{x})\bm{u},\label{eq: ctrl-affine}
\end{align}
where $\bm{x}\in \mathcal{X}\subset\mathbb{R}^n$ ($\mathcal{X}$ denotes a closed state constraint set), $f:\mathbb{R}^n\rightarrow \mathbb{R}^n$ and $g:\mathbb{R}^n\rightarrow \mathbb{R}^{n\times q}$ are locally Lipschitz, and $\bm{u}\in \mathcal{U}\subset\mathbb{R}^q$ ($\mathcal{U}$ denotes a closed control constraint set).
\begin{defi}[\it Forward invariant set]
    \normalfont A set $C\subset\mathbb{R}^n$ is forward invariant for system \eqref{eq: ctrl-affine} if its solutions starting at $\bm{x}(t_0)\in C$ satisfy $\bm{x}(t)\in C,\,\,\forall t\geq t_0$.
\end{defi}
Let $C$ be the safe set, defined as the 0-superlevel set of \textcolor{black}{continuously differentiable}
$h:\mathcal{X}\subset\mathbb{R}^n\rightarrow\mathbb{R}$
:
\begin{align}\begin{split} 
    C &= \{\bm{x}\in \mathcal{X}\subset\mathbb{R}^n:h(\bm{x})\geq 0 \}.
    \label{eq: safeset_def}
\end{split}\end{align}
\begin{defi}[\it CBF \cite{Ames.etal.TAC17}]
    \normalfont
    Given a set $C$ as in \eqref{eq: safeset_def}, $h(\bm{x})$ is a control barrier function (CBF) for system \eqref{eq: ctrl-affine} if there exists an 
    extended class $\mathcal{K}_\infty$ function\footnote{%
    An extended class $\mathcal{K}_\infty$ function is a function $\alpha:\mathbb{R}\rightarrow\mathbb{R}$ that is strictly increasing and $\alpha(0)=0$, see, e.g., \cite{Ames.etal.ECC19}.
    } 
    $\alpha$ such that, for all $\bm{x}\in \mathcal{X}$,
    \begin{align}
    \textstyle\sup_{\bm{u}\in\mathcal{U}}      [L_fh(\bm{x})+L_gh(\bm{x})\bm{u}+\alpha(h(\bm{x}))]\geq0,\label{eq: cbf_def}
    \end{align}
    where $L_f(\cdot), L_g(\cdot)$ denote the Lie derivatives along $f$ and $g$, respectively. It is assumed that $L_gh(\bm{x})\neq 0$ when $h(\bm{x})= 0$.\label{def: cbf}
\end{defi}
%
\begin{thm}[Safety Guarantee from \cite{Ames.etal.TAC17}]
    Given a CBF $h$ with the associated set $C$ from \eqref{eq: safeset_def}, any Lipschitz continuous controller $\bm{u}(t)\in \mathcal{U}, t\geq t_0$ that satisfies \eqref{eq: cbf_def} renders the set $C$ forward invariant for \eqref{eq: ctrl-affine}.
    \label{thm: safety-guarantee}
\end{thm}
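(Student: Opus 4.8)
The plan is to reduce this set–invariance claim to a one–dimensional differential inequality along closed‑loop trajectories, and then close the argument with a comparison principle. First I would fix an arbitrary $\bm{x}(t_0)\in C$. Since the controller $\bm{u}(\cdot)$ is Lipschitz continuous in $t$ and $f,g$ are locally Lipschitz, the closed‑loop vector field $(t,\bm{x})\mapsto f(\bm{x})+g(\bm{x})\bm{u}(t)$ is locally Lipschitz in $\bm{x}$ and continuous in $t$; by Picard--Lindel\"of, \eqref{eq: ctrl-affine} admits a unique, absolutely continuous solution $\bm{x}(\cdot)$ on a maximal interval $[t_0,T_{\max})$. I will show $\bm{x}(t)\in C$ on this interval; since $C\subset\mathcal{X}$ this also keeps the trajectory inside the region where \eqref{eq: cbf_def} is assumed to hold, and the extension to all $t\ge t_0$ then follows from standard forward‑completeness arguments (or is taken as a hypothesis).

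Second, I would study the scalar map $\eta(t):=h(\bm{x}(t))$. Because $h$ is continuously differentiable and $\bm{x}(\cdot)$ is absolutely continuous, $\eta$ is absolutely continuous with
\[
\dot\eta(t)=L_fh(\bm{x}(t))+L_gh(\bm{x}(t))\,\bm{u}(t)\qquad\text{for a.e. }t .
\]
Evaluating the CBF inequality \eqref{eq: cbf_def} at the specific admissible input $\bm{u}=\bm{u}(t)$, which satisfies it by hypothesis, yields the differential inequality $\dot\eta(t)\ge-\alpha(\eta(t))$ for almost every $t\in[t_0,T_{\max})$.

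Third, I would apply the comparison lemma. Let $y(\cdot)$ solve the scalar initial value problem $\dot y=-\alpha(y)$ with $y(t_0)=\eta(t_0)=h(\bm{x}(t_0))\ge 0$. Since $\alpha$ is an extended class $\mathcal{K}_\infty$ function, $0$ is an equilibrium of this ODE and $\alpha$ is sign preserving, so the forward solution stays nonnegative: if $y(t_0)=0$ then $y\equiv0$, while if $y(t_0)>0$ then $\dot y<0$ whenever $y>0$, so $y$ decreases monotonically toward $0$ without crossing it. The comparison lemma applied to $\dot\eta\ge-\alpha(\eta)$ with $\eta(t_0)\ge y(t_0)$ then gives $\eta(t)\ge y(t)\ge 0$, i.e.\ $h(\bm{x}(t))\ge 0$, for all $t\in[t_0,T_{\max})$; hence $\bm{x}(t)\in C$, which is exactly forward invariance of $C$.

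The main obstacle is the technical care needed in the comparison step when $\alpha$ is merely an extended class $\mathcal{K}_\infty$ function and is not locally Lipschitz at the origin (e.g.\ $\alpha(r)=\mathrm{sgn}(r)\sqrt{|r|}$), so that the scalar comparison ODE may lose backward uniqueness and may reach $0$ in finite time. I would dispatch this either by invoking the non‑Lipschitz version of the comparison lemma --- the forward solution is still well defined and nonnegative because the flow cannot leave the equilibrium $0$ --- or by a direct contradiction argument: if $\eta(t_1)<0$ for some $t_1$, set $t^{\star}=\sup\{t\le t_1:\eta(t)\ge 0\}$, so $\eta(t^{\star})=0$ and $\eta(t)<0$ on $(t^{\star},t_1]$; then $\alpha(\eta(t))<\alpha(0)=0$ there, hence $\dot\eta\ge-\alpha(\eta)>0$ a.e.\ on $(t^{\star},t_1]$, forcing $\eta(t_1)>\eta(t^{\star})=0$, a contradiction. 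The remaining items --- absolute continuity of the composition $h\circ\bm{x}$, the almost‑everywhere chain rule, and forward completeness --- are routine.
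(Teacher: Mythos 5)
Your argument is correct, but note that the paper itself offers no proof of Theorem~\ref{thm: safety-guarantee}: it is imported verbatim from \cite{Ames.etal.TAC17}, so there is no in-paper derivation to compare against. Your route is essentially the standard one behind that cited result --- reduce to the scalar differential inequality $\dot\eta \ge -\alpha(\eta)$ along the closed-loop trajectory and conclude $\eta(t)\ge 0$ from $\eta(t_0)\ge 0$. Where you differ slightly from the reference is in how the last step is closed: Ames et al.\ invoke a comparison-lemma/class-$\mathcal{KL}$ bound on $h(\bm{x}(t))$ (which implicitly leans on regularity of $\alpha$ and, in the boundary case, on Nagumo-type reasoning), whereas your direct contradiction argument via $t^{\star}=\sup\{t\le t_1:\eta(t)\ge 0\}$ is more elementary and cleanly sidesteps the failure of local Lipschitzness of $\alpha$ at the origin --- a point you are right to flag, since the forward comparison ODE $\dot y=-\alpha(y)$ need not have unique solutions there. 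Two caveats you already acknowledge and that are worth keeping explicit: (i) the conclusion holds on the maximal interval of existence, so forward completeness must be assumed or argued separately for the statement ``for all $t\ge t_0$'' to be meaningful; and (ii) reading ``satisfies \eqref{eq: cbf_def}'' as the pointwise inequality $L_fh+L_gh\,\bm{u}(t)+\alpha(h)\ge 0$ (rather than the supremum condition) is the intended interpretation and is what your step two uses. With those understood, the proposal is a complete and correct proof of the cited theorem.
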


\begin{remark}
    \textcolor{black}{Our SDF is continuously differentiable w.r.t. spatial variables but (only) differentiable a.e. w.r.t. orientation (due to active-set changes) and (only) when the robot and obstacle are not in collision. Similarly, in the cases of contact/penetration, the SDF is locally Lipschitz, and thus (only) differentiable a.e., such that non-differentiability occurs on a measure-zero set (vertices, interior medial axis, rotation-induced active-set switches). The classical CBF in Def.~\ref{def: cbf} does not apply at those (measure-zero set of) points. Consequently, the proofs in this paper (which build from the classical approach) fully apply to the non-collision case for single-integrator dynamics. We conjecture that a parallel analysis can be obtained a.e. for the collision and orientation-dependent cases; however, a rigorous nonsmooth analysis such as in \cite{Glotfelter.etal.LCSS17, Thirugnanam.etal.arXiv23} is planned for future work.}
\end{remark}

\begin{defi} [\it Control Lyapunov function \cite{Ames.CLF.CDC12}]  
\normalfont
A continuously differentiable function $V: \mathcal{X}\to \mathbb{R}$ is a globally and exponentially stabilizing control Lyapunov function (CLF) for system \eqref{eq: ctrl-affine} if there exist $c_1 > 0, c_2 > 0, c_3 > 0$ such that $\forall \bm{x} \in \mathcal{X}$, $c_1 \|\bm{x}\|^2 \leq V(\bm{x}) \leq c_2 \|\bm{x}\|^2$ and
    \begin{align}
    \textstyle
        \inf_{\bm{u}\in\mathcal{U}}      [L_fV(\bm{x})+L_gV(\bm{x})\bm{u}+c_3V(\bm{x})]\leq0.
        \label{eq: clf_def}
    \end{align}
\end{defi}

\begin{table*}[t!]
    \centering
    \caption{Conceptual comparison of existing CBF-based collision avoidance methods for polytopic sets.
    }
    \vspace{-0.5em}
    \resizebox{\textwidth}{!}{
    \begin{tabular}{|c|cccccc|}
    \hline
        \textbf{Approaches}
        & \textbf{Metric}
        & \textbf{Explicitness} 
        & \textbf{Smoothness}
        & \textbf{Conservatism} 
        & \textbf{Obstacle Defined in} 
        & \begin{tabular}{@{}c@{}}\textbf{Collision Recovery} \\[-0.2em] \textbf{Demonstrated}\end{tabular}
        \\
    \hline
    \hline  
        MDF-based CBF \cite{Thirugnanam.etal.ACC22, Thirugnanam.etal.arXiv23} 
        & Distance
        & Implicit
        & Nonsmooth
        & \begin{tabular}{@{}c@{}}Nonconservative (strongly convex sets) \\[-0.2em] Conservative (polytopes)\end{tabular}
        & Workspace
        & No\\
    \hline
        SDF-linearized CBF \cite{Singletary.etal.RAL22} 
        & Distance 
        & Implicit
        & Smooth
        & Conservative
        & Workspace 
        & No\\
    \hline
        Polynomial CBF \cite{Peng.etal.ICRA23} 
        & Distance 
        & Explicit 
        & Smooth
        & Tunable 
        & Workspace 
        & No\\
    \hline
        Diff-opti CBF \cite{Wei.etal.RAL24}
        & Scaling factor
        & Implicit 
        & Smooth
        & Tunable 
        & Workspace
        & No\\
    \hline
        Opti-free CBF \cite{Wu.etal.optfree.arxiv25} 
        & Distance
        & Explicit
        & Smooth
        & Tunable
        & \mdspace
        & No \\
    \hline
        \rowcolor{light-gray} {Ours} 
        & Distance 
        & Implicit 
        & Nonsmooth
        & Nonconservative
        & \mdspace 
        & Yes \\
    \hline
       \multicolumn{7}{p{7.5in}}{} \\[-0.75em]
       \multicolumn{7}{p{7.5in}}{\textit{Implicit} indicates that CBFs are defined via optimization \cite{Thirugnanam.etal.ACC22,Peng.etal.ICRA23,Wei.etal.RAL24}, or computed algorithmically \cite{Singletary.etal.RAL22}. \textit{Conservatism} indicates whether the method uses approximations or lower bounds in its formulation. \textit{Collision recovery} refers to the ability to recover from an initial collision with an obstacle; this property is theoretically enabled by SDF-based CBFs \cite{Singletary.etal.RAL22,Wu.etal.optfree.arxiv25} but has not been demonstrated in prior work.
    Abbreviations: diff = differentiable, opti = optimization.} \\
    \end{tabular}}
    \label{table1}
\end{table*}

Many existing works \cite{Ames.etal.TAC17,Lindemann.Dimarogonas.LCSS19,Xiao.Belta.TAC22}, unify CBF and CLF within a QP framework by relaxing the CLF with a slack variable penalized in the cost function to improve feasibility.

\subsection{Collision Detection via Minkowski Operation} \label{subsec:prelim_minkowski}

The workspace $\mathcal{W}$ of a robot is the physical (2D or 3D Euclidean) environment in which it operates, consisting of the space where the robot can move and the obstacles it must avoid. The configuration space ($\mathcal{C}$-space) represents all possible positions, orientations, etc, with each point defining a unique configuration. In contrast, the \mdspace~is
a dual of 
$\mathcal{W}$, and
is a Euclidean space with the same (2D or 3D) dimensionality as  $\mathcal{W}$. In  \mdspace, the robot is shrunk to a point at the origin, and the obstacles are transformed to encode the robot’s geometry and configuration (see Fig.~\ref{fig: sd_CSO}).

Minkowski operations map problems defined in  $\mathcal{W}$ to \mdspace, where distance computation (and thus collision detection) can be formulated
directly \cite{Cameron.Culley.ICRA86,gjk,epa,Ericson}. 
Given two sets $\mathcal{A}, \mathcal{B}\subset\mathbb{R}^n$, their Minkowski sum is defined as the set of all pairwise sums of points taken from each set:
    \begin{align}
        \mathcal{A}\oplus\mathcal{B}=\{\bm{a}+\bm{b}\mid \bm{a}\in \mathcal{A}, \bm{b}\in\mathcal{B}\},
        \label{eq: mink_sum}
    \end{align}
where $\bm{a},\bm{b}$ are position vectors in sets of $\mathcal{A},\mathcal{B}$, respectively.
\begin{defi} [\it Configuration Obstacle (\cobs) \cite{LaValle.Planning.06}] 
\normalfont
    Given a robot $\mathcal{R}$ and an obstacle $\mathcal{O}$, the corresponding 
    \cobs~is:
    \begin{align}
        \mathcal{C}_\mathcal{R}(\mathcal{O}) = \mathcal{O}\oplus (-\mathcal{R})\label{eq: cso},
    \end{align}
    where $-\mathcal{R}=\{-\bm{p}\mid \bm{p} \in \mathcal{R}\}$ \textcolor{black}{is the reflection of $\mathcal{R}$ through the origin, where $\bm{p}$ is a position vector of $\mathcal{R}$. Note that we refer to the Minkowski sum of $\mathcal{O}$ and $-\mathcal{R}$ as the \textit{Minkowski difference}, following collision-detection convention.}\footnote{In fields like mathematics and image processing, the Minkowski difference has a different (alternative) definition, where $\mathcal{A}\ominus \mathcal{B}$ is the set that, when added to $\mathcal{B}$, recovers $\mathcal{A}$.}
\end{defi}

\begin{lemma}[Collision between Convex Sets \cite{gjk}]
    A robot $\mathcal{R}$ and an obstacle $\mathcal{O}$ intersect (are in collision) if and only if their \cobs, $\mathcal{C}_\mathcal{R}(\mathcal{O})$, contains the origin $\bm{0}_c$.\label{lem: collision}
\end{lemma}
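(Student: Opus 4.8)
The plan is to prove the equivalence by directly unpacking the definition of the configuration obstacle in \eqref{eq: cso}, namely $\mathcal{C}_\mathcal{R}(\mathcal{O}) = \mathcal{O}\oplus(-\mathcal{R}) = \{\bm{o}-\bm{r}\mid \bm{o}\in\mathcal{O},\ \bm{r}\in\mathcal{R}\}$ (combining \eqref{eq: mink_sum} and the reflection $-\mathcal{R}$), and to show that each direction of the ``if and only if'' reduces to the existence of a point shared by $\mathcal{R}$ and $\mathcal{O}$.

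For the forward direction, I would assume $\mathcal{R}\cap\mathcal{O}\neq\emptyset$ and pick any $\bm{p}$ in the intersection; then $\bm{p}\in\mathcal{O}$ and $\bm{p}\in\mathcal{R}$, so taking $\bm{o}=\bm{p}$ and $\bm{r}=\bm{p}$ in the Minkowski-difference description gives $\bm{0}_c=\bm{p}-\bm{p}\in\mathcal{C}_\mathcal{R}(\mathcal{O})$. For the converse, I would assume $\bm{0}_c\in\mathcal{C}_\mathcal{R}(\mathcal{O})$, so by definition there exist $\bm{o}\in\mathcal{O}$ and $\bm{r}\in\mathcal{R}$ with $\bm{o}-\bm{r}=\bm{0}_c$, i.e.\ $\bm{o}=\bm{r}$; this common vector then lies in both $\mathcal{O}$ and $\mathcal{R}$, hence in $\mathcal{R}\cap\mathcal{O}$, which is therefore nonempty.

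There is no substantial obstacle here: the statement is essentially a set-theoretic identity, and the argument amounts to a one-line correspondence between points of $\mathcal{R}\cap\mathcal{O}$ and representations of $\bm{0}_c$ as a Minkowski difference of a point of $\mathcal{O}$ and a point of $\mathcal{R}$. I would, however, add two clarifying remarks. First, convexity of $\mathcal{R}$ and $\mathcal{O}$ is not actually needed for this equivalence; it appears in the lemma because it is the regime relevant to the rest of the paper (and to GJK/EPA), where convexity is what renders the subsequent minimum-distance and penetration-depth problems tractable as convex programs. Second, I would note the role of the sign convention: it is the reflection $-\mathcal{R}$, rather than $\mathcal{R}$ itself, that causes the collision witness to coincide with the origin $\bm{0}_c$, consistent with the collision-detection definition of the Minkowski difference adopted immediately after \eqref{eq: cso}.
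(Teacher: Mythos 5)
Your proof is correct. The paper itself gives no proof of this lemma --- it is imported verbatim from the GJK reference --- so there is nothing to compare against; your two-line set-theoretic argument (a collision witness $\bm{p}\in\mathcal{R}\cap\mathcal{O}$ corresponds exactly to a representation $\bm{0}_c=\bm{o}-\bm{r}$ with $\bm{o}\in\mathcal{O}$, $\bm{r}\in\mathcal{R}$) is the standard and complete one.

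Your aside that convexity is not needed for the equivalence is also correct, and it is worth flagging because it mildly contradicts the paper's own Remark following the lemma, which asserts that the lemma ``holds only for convex sets.'' The set-theoretic identity $\mathcal{R}\cap\mathcal{O}\neq\varnothing \iff \bm{0}_c\in\mathcal{O}\oplus(-\mathcal{R})$ holds for arbitrary nonempty sets; convexity matters only downstream, where the Minkowski difference must be a convex polygon so that the origin-membership test, the minimum-distance QP \eqref{eq: dist_c}, and the penetration-depth LP \eqref{eq: pd_c} are tractable (and where algorithms such as GJK, which query only support functions, effectively see the convex hulls of the inputs). Your reading of the sign convention --- that the reflection $-\mathcal{R}$ is what places the collision witness at the origin --- is likewise consistent with the paper's definition of the configuration obstacle in \eqref{eq: cso}.
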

\begin{remark}
   While the Minkowski sum \eqref{eq: mink_sum} and \cobs~computation \eqref{eq: cso} are general operations that apply to any shape, it is worth emphasizing that
   Lem.~\ref{lem: collision} holds only for convex sets. \textcolor{black}{For well-behaved non-convex sets (e.g., non-fractal) that can be decomposed into convex subsets, a CO is computed for each convex-robot-convex-obstacle pair, and Lem. 1 applied iteratively. In this paper, we consider nonempty convex polygons for both robot and obstacle.}
\end{remark}

\begin{lemma}[Properties of the \cobs\cite{4M}]
    Given 
    convex, nonempty polygons $\mathcal{R}$ and $\mathcal{O}$ with $\ell_r$ and $\ell_o$ edges. Their 
    \cobs, $\mathcal{O}\oplus(-\mathcal{R})$, is a convex, nonempty polygon with 
    $\leq \ell_r+\ell_o$ edges, and can be computed in $O(\ell_r+\ell_o)$ time.
    \label{lem: properties}
\end{lemma}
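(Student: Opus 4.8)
The plan is to prove the three assertions separately — convexity together with nonemptiness (and boundedness), the edge bound $\leq \ell_r+\ell_o$, and the $O(\ell_r+\ell_o)$-time construction. Throughout I would first dispose of the reflection: the map $\bm{p}\mapsto-\bm{p}$ is a linear isomorphism of $\mathbb{R}^2$, so $-\mathcal{R}$ is again a nonempty, bounded, convex polygon with exactly $\ell_r$ edges. It therefore suffices to prove the statement for the Minkowski sum $\mathcal{Q}:=\mathcal{O}\oplus(-\mathcal{R})$ of two convex polygons having $\ell_o$ and $\ell_r$ edges, respectively.

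For convexity and nonemptiness I would take $\bm{q}_1=\bm{o}_1+\bm{s}_1$ and $\bm{q}_2=\bm{o}_2+\bm{s}_2$ in $\mathcal{Q}$, with $\bm{o}_i\in\mathcal{O}$ and $\bm{s}_i\in -\mathcal{R}$, and regroup any convex combination as $(\lambda\bm{o}_1+(1-\lambda)\bm{o}_2)+(\lambda\bm{s}_1+(1-\lambda)\bm{s}_2)$, which lies in $\mathcal{Q}$ by convexity of $\mathcal{O}$ and of $-\mathcal{R}$; nonemptiness is immediate since $\mathcal{O},\mathcal{R}\neq\varnothing$, and boundedness of $\mathcal{Q}$ follows from boundedness of both summands. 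To see that $\mathcal{Q}$ is a polygon (finitely many extreme points), I would use the support-function identity $h_\mathcal{Q}(\bm{u})=h_\mathcal{O}(\bm{u})+h_{-\mathcal{R}}(\bm{u})$ and the fact that the face of $\mathcal{Q}$ exposed by a direction $\bm{u}$ (i.e., the arg-max of $\langle\bm{u},\cdot\rangle$ over $\mathcal{Q}$) equals the Minkowski sum of the exposed faces of $\mathcal{O}$ and of $-\mathcal{R}$; hence every vertex of $\mathcal{Q}$ is a sum of a vertex of $\mathcal{O}$ and a vertex of $-\mathcal{R}$, of which there are only finitely many.

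The edge bound is the crux. I would argue via outward edge normals: an edge of $\mathcal{Q}$ is a one-dimensional exposed face $\mathcal{Q}(\bm{u})=\mathcal{O}(\bm{u})\oplus(-\mathcal{R})(\bm{u})$ for some unit normal $\bm{u}$, and a Minkowski sum of two faces is one-dimensional only if at least one of the two summand faces is itself one-dimensional — that is, $\bm{u}$ must be the outward normal of an edge of $\mathcal{O}$ or of an edge of $-\mathcal{R}$. Since distinct edges of a convex polygon carry distinct outward normals, and $\mathcal{O}$ supplies at most $\ell_o$ such directions while $-\mathcal{R}$ supplies at most $\ell_r$, the polygon $\mathcal{Q}$ has at most $\ell_r+\ell_o$ distinct edge normals, hence at most $\ell_r+\ell_o$ edges (two coincident normals yielding a single merged edge whose length is the sum of the two parallel edges' lengths). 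The main obstacle is making the "exposed face of the sum equals the sum of the exposed faces" step rigorous and confirming the dimension count; the rest is bookkeeping.

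Finally, for the $O(\ell_r+\ell_o)$ construction I would invoke the standard convolution/merge procedure: list the edge vectors of $\mathcal{O}$ and of $-\mathcal{R}$ in counterclockwise order — which, for a convex polygon, is already sorted by the polar angle of the outward normal — then merge these two already-sorted cyclic lists by angle in a single linear pass (as in mergesort), and lay the merged sequence of edge vectors head-to-tail starting from the vertex $\bm{o}^\star+\bm{s}^\star$, where $\bm{o}^\star$ and $\bm{s}^\star$ are the lexicographically minimal vertices of $\mathcal{O}$ and of $-\mathcal{R}$ (their sum is the lexicographically minimal vertex of $\mathcal{Q}$ by the exposed-face argument). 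Correctness of the output boundary follows from the edge-normal analysis above, the choice of starting vertex from the vertex characterization, and the running time from the single linear merge plus a linear scan to accumulate the vertices; locating the two lexicographically minimal vertices and cyclically rotating each edge list to begin there also costs $O(\ell_r+\ell_o)$, giving the claimed bound.
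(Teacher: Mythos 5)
The paper offers no proof of this lemma at all---it is imported wholesale from the cited computational-geometry text \cite{4M}---so there is nothing internal to compare against. Your reconstruction is correct and is, in substance, the standard argument from that source: convexity and nonemptiness of $\mathcal{O}\oplus(-\mathcal{R})$ by regrouping convex combinations, the edge bound via the fact that an exposed face of a Minkowski sum is the Minkowski sum of the corresponding exposed faces (so every edge normal of the sum must already be an edge normal of $\mathcal{O}$ or of $-\mathcal{R}$, and distinct edges of a convex polygon carry distinct normals), and the linear-time construction by merging the two angle-sorted cyclic edge lists and laying the result head-to-tail from the sum of the lexicographically minimal vertices. The one step you rightly flag as the crux---that the face of $\mathcal{Q}$ exposed by $\bm{u}$ equals $\mathcal{O}(\bm{u})\oplus(-\mathcal{R})(\bm{u})$, which in $\mathbb{R}^2$ forces any one-dimensional face of the sum to inherit its normal from a summand---is a standard support-function fact and closes the argument; your parenthetical handling of parallel edges (coincident normals merging into a single longer edge) is exactly why the bound is $\leq\ell_r+\ell_o$ rather than equality, consistent with the degeneracy the paper later acknowledges in its Assumption 2.
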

By Lem.~\ref{lem: collision} and Lem.~\ref{lem: properties}, the collision detection between two convex polygons reduces to checking whether the origin lies inside their \cobs, as illustrated in Fig.~\ref{fig: sd_CSO}.
\subsection{Signed distance for Collision Avoidance} \label{subsec:prelim_sd}
While it may be sufficient to use the minimum distance to enforce safety in some scenarios, this measure alone is incapable of addressing situations where the robot is already in contact with an obstacle. To fully characterize the spatial relationship between two sets---whether disjoint or intersecting---the notion of signed distance\footnote{Similar notion appears as the minimal translational distance in \cite{Cameron.Culley.ICRA86}.} \cite{Schulman.etal.IJRR14} is required. 
\begin{defi} [\it Signed Distance via \cobs] 
\normalfont
    The signed distance between a robot $\mathcal{R}$ and an obstacle $\mathcal{O}$ is defined as
    \begin{align}\begin{split}
        \text{sd}(\mathcal{R},\mathcal{O}) &=\text{dist}(\mathcal{R},\mathcal{O})-\text{pd}(\mathcal{R},\mathcal{O})\label{eq: sd},
        \end{split}
    \end{align}
    where $\text{dist}(\cdot,\cdot)$ and $\text{pd}(\cdot,\cdot)$ are the minimum distance and penetration depth, and defined \cite{Zhang.OBCA.TCST20} as
    \begin{align}\begin{split}
        \text{dist}(\mathcal{R},\mathcal{O})
        &=\min\{\|\bm{t}\|\mid (\mathcal{R}+\bm{t})\cap\mathcal{O}\neq\varnothing\}
        \\
        &=\min\{\|\bm{t}\|\mid (\bm{0}_c+\bm{t})\cap\mathcal{C}_\mathcal{R}(\mathcal{O})\neq\varnothing\},\\[0.5em]
        \text{pd}(\mathcal{R},\mathcal{O})
         &=\min\{\|\bm{t}\|\mid (\mathcal{R}+\bm{t})\cap\mathcal{O}=\varnothing\}\\
        &=\min\{\|\bm{t}\|\mid (\bm{0}_c+\bm{t})\cap\mathcal{C}_\mathcal{R}(\mathcal{O})=\varnothing\}.
        \label{eq: sd_sub}
    \end{split}\end{align}
\end{defi}
\noindent 
The signed distance is positive when $\mathcal{R}$ and $\mathcal{O}$ are disjoint and negative when they intersect.
Thus, collision avoidance can be enforced by ensuring $\text{sd}(\mathcal{R},\mathcal{O})>0$. The signed distance between two sets can equivalently be expressed as the signed distance between the origin $\bm{0}_c$ and their \cobs. 
\removeForSpace{%
Prior work \cite{Singletary.etal.RAL22,Wu.etal.optfree.arxiv25} uses the SDF as CBFs with different approximation methods to circumvent the difficulty of computing the derivative of the implicit SDF, at the cost of generating conservative maneuvers.%
} 
\textcolor{black}{The closest related work \cite{Thirugnanam.etal.ACC22} constructs CBFs based on implicit MDFs by posing a convex program with dual variables, and enforces the resulting NCBF with a linear program (LP) that imposes a lower bound on the CBF derivative.} Moreover, \cite{Thirugnanam.etal.ACC22} considers only minimum distance and not penetration depth (cases where the system starts in collision). In contrast, we propose a new framework for SDF-based CBFs that incorporates penetration depth, as detailed in Sec.~\ref{sec: method}.

\section{Problem Formulation} \label{subsec:prelim_formulation}
The robot's state is denoted $\bm{x}$. Both robot and obstacles are defined using the halfspace-representation (H-rep), $\mathcal{R}(\bm{x}):= \{y\in \mathbb{R}^{d}\mid A_r(\bm{x}) y\leq b_r(\bm{x})\}$ and $\mathcal{O}_i:= \{y\in \mathbb{R}^{d}\mid A_{o_i} y\leq b_{o_i}\}$, with $A_r\in\mathbb{R}^{\ell_r\times d},~b_r\in\mathbb{R}^{\ell_r}$,  $A_{o_i}\in\mathbb{R}^{\ell_{o_i}\times d},~b_{o_i}\in\mathbb{R}^{\ell_{o_i}}$, and $d\in\{2,3\}$ for 2D and 3D spaces. $\ell_r$ and $\ell_{o_i}$ denote the number of edges of the robot and the $i$-th obstacle, respectively. The corresponding \cobs s are denoted as $\mathcal{O}_i^\mathcal{C}(\bm{x})=\{y\in \mathbb{R}^{d}\mid A^\mathcal{C}_i(\bm{x}) y\leq b^\mathcal{C}_i(\bm{x})\}$, omitting the subscript $\mathcal{R}$ for brevity, where $A_i^\mathcal{C}\in\mathbb{R}^{\ell_{\mathcal{C}_i}\times d},~b_i^\mathcal{C}\in\mathbb{R}^{\ell_{\mathcal{C}_i}}$, and $\ell_{\mathcal{C}_i}$ denotes the number of edges of the $i$-th \cobs.

By abuse of notation, we write ${\mathcal{R}=\mathcal{R}(\bm{x})}$ and $\mathcal{O}_i^\mathcal{C}=\mathcal{O}_i^\mathcal{C}(\bm{x})$. 
The \cobs~is specific to the obstacle and the robot's current configuration, and thus 
depends on $\bm{x}$. In our framework, the \cobs~is used to compute either the minimum distance or the penetration depth, depending on whether the origin lies outside or inside the \cobs, respectively.

We make the following assumptions:
\begin{enumerate}[label=\arabic*.]
    \item The robot $\mathcal{R}$ is a rigid body, is uniformly bounded, and has nonempty interior, $\forall \bm{x}\in \mathcal{X}$.\label{item:assp1}
    \item $A^\mathcal{C}_i(\bm{x}), b^\mathcal{C}_i(\bm{x})$ are 
    \textcolor{black}{differentiable a.e.},
    and the inequality constraints provide a minimal representation of the \cobs. For almost all $\bm{x}\in \mathcal{X}$, $\ell_{\mathcal{C}_i}=\ell_r+\ell_{o_i}$. Note that, if $\mathcal{R}$ and $\mathcal{O}_i$ have parallel edges, the associated constraints $A^\mathcal{C}_i,~b^\mathcal{C}_i$ may lose one/more rows, depending on the number of parallel edges. Yet, such degeneracy occurs only on a measure zero set and thus does not affect the optimization for almost all $\bm{x}$ (as reported in \cite{Schulman.etal.IJRR14}).\label{item:assp2}
    \item All obstacles are static, uniformly bounded, and have nonempty interior. Also, the information about obstacles is accessible to the robot.\label{item:assp3}
\end{enumerate}
We define the problem using the robot $\mathcal{R}$ and a single obstacle $\mathcal{O}$, omitting the subscript $i$ for simplicity. 
\begin{prob}
    Given a robot $\mathcal{R}$ with known affine dynamics in the form of \eqref{eq: ctrl-affine} and a static obstacle $\mathcal{O}$, find a state-feedback optimal controller $\bm{u}^*(\bm{x})$ that both
    \begin{enumerate}
        \item \textit{achieves objectives:} minimizes energy consumption $J=\int_{0}^{T}\bm{u}^{\top}\bm{u} dt$, and ensures the robot reaches the desired position $(x_d,y_d)$, 
        \item \textit{satisfies constraints:} safety ---  requiring $\text{sd}(\mathcal{R},\mathcal{O})>d_{\text{safe}}$, where $d_{\text{safe}}\geq 0$ is the user-defined safety margin, and actuator limitations $\bm{u}_{\min}\leq\bm{u}\leq\bm{u}_{\max}$.
    \end{enumerate} 
    \label{prob}
\end{prob}

\begin{remark}
    This formulation extends to multiple obstacles by applying the same safety constraint independently to each pair $(\mathcal{R},\mathcal{O}_i)$. Additionally, non-convex polygons can be represented as a union of multiple convex polygons.
    \label{rem2}
\end{remark}
We now develop a framework for constructing CBFs based on the SDF without introducing geometric approximations. 
\removeForSpace{%
We formulate both minimum distance and penetration depth problems as convex optimizations in \mdspace. These optimizations output a critical point, and its derivative, computed through differentiable optimization, is used to obtain the
derivative of CBFs. The complete proposed framework, including all components introduced above, is described in the next section.
} 

\section{SDF-Based CBFs for Polytopes in \mdspace}\label{sec: method}
This section presents our main result of constructing CBFs based on the ``exact'' SDF by recasting its computation as convex optimization in \mdspace.
The derivative of the SDF 
is then computed using differentiable convex optimization.

\subsection{Optimization-Based SDF Formulation in \mdspace
}\label{sec: 5a-obsd}
The minimum distance between $\mathcal{R}$ and $\mathcal{O}$ when the two shapes are disjoint ($\mathcal{R}\cap \mathcal{O} = \varnothing$) can be computed by solving a QP in the workspace:
\begin{align}\begin{split}
    &\min\,\,\|\mathbf{x}-\mathbf{y}\|^2_2
   \quad\text{subject to}\,\,\,\, A_r \mathbf{x}\leq b_r, \,\,A_o\mathbf{y} \leq b_o\\
   &\,\,\text{with}\,\,\,\, \text{dist}(\mathcal{R},\mathcal{O}) =\|\mathbf{x}^*-\mathbf{y}^*\|_2,
   \label{eq: dist_w}
\end{split}\end{align}
where $\|\cdot\|_2$ is the Euclidean norm and $(\mathbf{x}^*,\mathbf{y}^*)$ is the optimal solution pair to \eqref{eq: dist_w}. The distance between sets can be reformulated as the distance between a point and a set \cite{Boyd.Vandenberghe.04}, 
$    
\text{dist}(\mathcal{R},\mathcal{O})=\text{dist}(\bm{0}_c,\mathcal{O}\oplus(-\mathcal{R})),
$    
where $\mathcal{O}\oplus(-\mathcal{R})$ is their \cobs.
\noindent We formulate the equivalent problem in \mdspace~as 
\begin{align}\begin{split}
    &\min\,\,\|\bm{z}\|^2_2\quad\text{subject to}\,\,\,\,A^\mathcal{C} \bm{z}\leq b^\mathcal{C}\\
   &\,\,\text{with}\,\,\,\, 
   \text{dist}(\bm{0}_c,\mathcal{O}^\mathcal{C}) =\|\bm{z}^*\|_2,
   \label{eq: dist_c}
\end{split}\end{align}
where $\bm{z}^*$ is the optimal solution of \eqref{eq: dist_c} and it can also be interpreted as the projection\footnote{The projection of a point $\bm{x}$ on a set $C$ is any point $\bm{z} \in C$ that is closest to $\bm{z}$, i.e., satisfies $\| \bm{z}-\bm{x} \| = \text{dist}(\bm{x}, C)$.} of the origin $\bm{0}_c$ on $\mathcal{O}^\mathcal{C}$.
Similarly, the penetration depth $\text{pd}(\mathcal{R}, \mathcal{O})$ can be interpreted as the problem of finding the minimum translation vector (MTV) that separates two intersecting convex sets. It can be conceptually formulated as an optimization problem in the workspace (a similar formulation appears in \cite{Gao.arXiv.2023}) as
\begin{align}\begin{split}
    &\text{pd}(\mathcal{R},\mathcal{O}) =\,\, \min\,\, \|\bm{z}\|_2\quad \text{subject to}\,\,\,\, \mathcal{R}'\cap\mathcal{O} = \varnothing,
    \label{eq: pd_w}
\end{split}\end{align}
where $\bm{z}\in\mathbb{R}^d$ is the MTV, and $\mathcal{R}'= \{y\in \mathbb{R}^{d}\mid A_r y\leq b_r+A_r\bm{z}\}$ is the translation of $\mathcal{R}$ by $\bm{z}$.
However, unlike in the minimum distance case, it is unclear how to represent the separation constraint $\mathcal{R}'\cap \mathcal{O} = \varnothing$ as a convex constraint. 

To address this issue, we propose a reformulation of \eqref{eq: pd_w} as LP in \mdspace~by leveraging the notion of the depth\footnote{The depth of a point $\bm{x}$ on a convex set $C$ is the radius of the largest ball, centered at $\bm{x}$, that lies entirely within $C$.} of a point relative to a convex set \cite{Boyd.Vandenberghe.04} 
\begin{align}\begin{split}
    & \text{depth}(\bm{0}_c,\mathcal{O}^\mathcal{C}) =\,\,{\max}\,\, s
     \\
   &\,\,\text{subject to}\,\,\,\,s\geq 0,\,\,s\|A^\mathcal{C}_k\|_2  \leq b^\mathcal{C}_k,\,\, k=1,\ldots,\ell_{\mathcal{C}},
   \label{eq: pd_c}
\end{split}\end{align}
where $s\in\mathbb{R}$ is the minimal translational distance required to separate two overlapping sets. 
The vector $A^\mathcal{C}_k$ and scalar $b^\mathcal{C}_k$, denoting the $k$-th row $A^\mathcal{C}$ and $b^\mathcal{C}$, respectively, define the bounding hyperplanes of $\mathcal{O}^\mathcal{C}$. It is shown in \cite{Cameron.Culley.ICRA86,epa} that $\text{pd}(\mathcal{R},\mathcal{O})=\text{depth}(\bm{0}_c,\mathcal{O}^\mathcal{C})$.

However, while \eqref{eq: pd_c} gives the penetration depth (scalar), we also need the direction of separation to recover the optimal solution $\bm{z}^*$ of \eqref{eq: pd_w}, which cannot be solved directly. 
This direction can be obtained by identifying the active constraint from the dual variable of \eqref{eq: pd_c}.
Let $k_\text{act}$ be the set of active indices, and $A^{\mathcal{C}}_{k_\text{act}}$ be the matrix containing the rows corresponding to the active constraints at the optimal solution $s^*$. By strong duality in LP, complementary slackness holds and implies that nonzero dual variables $\lambda_{k_\text{act}}$ correspond to the active constraints. 

In most cases, \textcolor{black}{a single constraint is active, so}
$A^{\mathcal{C}}_{k_\text{act}}$ reduces to one row vector, denoted as $a^{\mathcal{C}}_{k_\text{act}}$, which uniquely determines the separation direction. When multiple constraints are active, $A^{\mathcal{C}}_{k_\text{act}}$ becomes a matrix. \textcolor{black}{As the origin is equidistant from multiple active planes, the separating direction lies in the convex cone of their normals.}
The presence of multiple active constraints occurs only for a measure-zero set of configurations.
\textcolor{black}{Moreover, if such a case occurs in practice, any active constraint yields a valid separating direction.
}
Once the active constraint of \eqref{eq: pd_c} is identified, corresponding to the hyperplane $\mathcal{H}=\{y\in \mathbb{R}^{d}\mid a^{\mathcal{C}}_{k_\text{act}}y=b^{\mathcal{C}}_{k_\text{act}}\}$, $\bm{z}^*$ can be computed by projecting the origin $\bm{0}_c$ onto $\mathcal{H}$ \cite{Boyd.Vandenberghe.04} as
\begin{align}
    \bm{z}^* = \text{proj}_{\mathcal{H}}(\bm{0}_c) = b^{\mathcal{C}}_{k_\text{act}}(a^{\mathcal{C}}_{k_\text{act}})^{\top}
    /\|a^{\mathcal{C}}_{k_\text{act}}\|^2_2.
    \label{eq: pd_zstar}
\end{align}

The signed distance \eqref{eq: sd} is computed by solving either the QP in \eqref{eq: dist_c} or the LP in \eqref{eq: pd_c}, with feasibility guaranteed by the assumption \ref{item:assp1} and Lem.~\ref{lem: properties}. The equivalence of computing the signed distance in both $\mathcal{W}$- and \mdspace s is shown in Fig.~\ref{fig: sd_CSO}. The key advantages of formulating the optimization problems in \mdspace\,are: (1) it enables convex optimization (LP) formulation for penetration depth computation; and (2) the minimum distance and penetration depth can be viewed as companion problems that provide a generalized measure of distance. This allows us to compute derivative of the signed distance using the same differentiable optimization framework, see Sec.~\ref{sec: 5c-dh}. 

\subsection{CBF Construction Using Signed Distance}
The point $\bm{z}^*$, defined in \mdspace ~and obtained from our optimization-based signed distance formulation, is referred to as the critical point, as its derivative is used in computing the derivative of CBF. Geometrically, $\bm{z}^*$ lies on the boundary of the \cobs, $\partial\mathcal{O}^\mathcal{C}$, and is the closest point to the origin $\bm{0}_c$ in \mdspace. Thus, the signed distance is equal to $\pm\|\bm{z}^*\|_2$. 

Based on this, we propose the following CBF candidate: 
\begin{align}\begin{split}   
    &h(\bm{x})=\text{sd}(\mathcal{R}(\bm{x}),\mathcal{O})-d_\text{safe}=\text{sd}(\bm{0}_c,\mathcal{O}^\mathcal{C}(\bm{x}))-d_\text{safe},
    \\[0.3em]
    &\,\,\text{where}\,\,
    \text{sd}(\mathcal{R}(\bm{x}),\mathcal{O})
    =\left\{\begin{matrix}
\|\bm{z}^*(\bm{x})\|_2 \,\,\,\,\,\quad \text{if}\,\,\, 
\bm{0}_c\notin\mathcal{O}^\mathcal{C}\\
-\|\bm{z}^*(\bm{x})\|_2\quad \text{if}\,\,\,
\bm{0}_c\in\mathcal{O}^\mathcal{C}
\end{matrix}\right.,
    \label{eq: cbf}
\end{split}
\end{align}
where $\bm{z}^*(\bm{x})\in\mathbb{R}^d$ is the optimal solution to \eqref{eq: dist_c} if $\mathcal{R}\cap \mathcal{O}=\varnothing$, and can be indirectly found by solving \eqref{eq: pd_c} with its dual variable analysis in the case of $\mathcal{R}\cap \mathcal{O}\neq\varnothing$. Recall that both the critical point $\bm{z}^*(\bm{x})$ and the \cobs, $\mathcal{O}^\mathcal{C}(\bm{x})$, depend on the robot's state $\bm{x}$. The associated safe set of the system is defined as $C = \{\bm{x}\in\mathbb{R}^n:\text{sd}(\bm{0}_c,\mathcal{O}^\mathcal{C}(\bm{x}))\geq d_\text{safe} \}.$
Note that the SDF-based CBF \eqref{eq: cbf} is implicit and is obtained from an optimization that depends on system state $\bm{x}$. It is important to note that the SDF in Euclidean space is differentiable a.e. \cite{Singletary.etal.RAL22}, which means it is differentiable at all points except for a set of (Lebesgue) measure zero.
\begin{lemma}
    \textcolor{black}{Let $\mathcal{X}_f=\{x\in\mathcal{X}\mid\mathcal{R}(\bm{x}) \cap \mathcal{O}=\varnothing\}$. The optimal solution $\bm{z}^*(\bm{x})$ of \eqref{eq: dist_c} is unique for every $\bm{x}\in\mathcal{X}_f$, and $\bm{z}^*(\cdot)$ is continuous and differentiable a.e. on $\mathcal{X}_f$.}
\end{lemma}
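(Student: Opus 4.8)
The plan is to exploit the identification $\bm{z}^*(\bm{x})=\text{proj}_{\mathcal{O}^\mathcal{C}(\bm{x})}(\bm{0}_c)$, i.e., $\bm{z}^*(\bm{x})$ is the metric projection of the origin onto the \cobs, and then dispatch the three claims—uniqueness, continuity, and a.e.\ differentiability—in that order. First I would record that $\mathcal{X}_f$ is relatively open in $\mathcal{X}$: since $\mathcal{R}(\bm{x})$ and $\mathcal{O}$ are compact and, by rigidity (Assumption~\ref{item:assp1}) and the standing smoothness of the pose parametrization, the set map $\bm{x}\mapsto\mathcal{R}(\bm{x})$ is continuous in the Hausdorff metric, disjointness $\mathcal{R}(\bm{x})\cap\mathcal{O}=\varnothing$ is an open condition; by Lem.~\ref{lem: collision} it is equivalent to $\bm{0}_c\notin\mathcal{O}^\mathcal{C}(\bm{x})$. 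For \emph{uniqueness}, I would invoke Lem.~\ref{lem: properties} with Assumptions~\ref{item:assp1} and \ref{item:assp3}: the feasible set $\mathcal{O}^\mathcal{C}(\bm{x})=\{\bm{z}\mid A^\mathcal{C}(\bm{x})\bm{z}\le b^\mathcal{C}(\bm{x})\}$ of \eqref{eq: dist_c} is a nonempty, compact, full-dimensional convex polytope, and $\bm{z}\mapsto\|\bm{z}\|_2^2$ is continuous and strictly convex, so its minimum over that set is attained at a unique point (for every $\bm{x}\in\mathcal{X}$, not only on $\mathcal{X}_f$). On $\mathcal{X}_f$ one moreover has $\bm{z}^*(\bm{x})\neq\bm{0}_c$, $\bm{z}^*(\bm{x})\in\partial\mathcal{O}^\mathcal{C}(\bm{x})$, and $\|\bm{z}^*(\bm{x})\|_2=\text{dist}(\bm{0}_c,\mathcal{O}^\mathcal{C}(\bm{x}))>0$, which is what makes the subsequent closed-form argument go through.

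For \emph{continuity}, I would argue that $\bm{x}\mapsto\mathcal{O}^\mathcal{C}(\bm{x})=\mathcal{O}\oplus(-\mathcal{R}(\bm{x}))$ is Hausdorff-continuous: $\bm{x}\mapsto-\mathcal{R}(\bm{x})$ is Hausdorff-continuous by Assumption~\ref{item:assp1}, and the Minkowski sum is Hausdorff-continuous on compact convex sets. Then I would use that the metric projection of a fixed point onto a uniformly bounded nonempty closed convex set depends continuously on the set in the Hausdorff metric; equivalently, apply Berge's maximum theorem to \eqref{eq: dist_c}, whose constraint correspondence $\bm{x}\mapsto\mathcal{O}^\mathcal{C}(\bm{x})$ is continuous (upper hemicontinuity from closed graph and uniform boundedness, lower hemicontinuity from the full-dimensionality/Slater property) and compact-valued, and whose argmin is single-valued by the uniqueness step. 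Either route gives continuity of $\bm{z}^*(\cdot)$ on $\mathcal{X}$, hence on $\mathcal{X}_f$.

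For \emph{a.e.\ differentiability} on $\mathcal{X}_f$, I would isolate an exceptional set $N\subseteq\mathcal{X}_f$ equal to the union of (i) the points where some entry of $A^\mathcal{C},b^\mathcal{C}$ fails to be differentiable, (ii) the degenerate configurations where $\mathcal{R}$ and $\mathcal{O}$ share parallel edges (so $\ell_{\mathcal{C}}<\ell_r+\ell_o$), and (iii) the configurations where $\bm{z}^*(\bm{x})$ lies on an edge/vertex of $\mathcal{O}^\mathcal{C}(\bm{x})$, i.e., two or more constraints of \eqref{eq: dist_c} are active. Sets (i)--(ii) are Lebesgue-null by Assumption~\ref{item:assp2}. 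Off $N$, exactly one constraint $k_\text{act}=k_\text{act}(\bm{x})$ is active; since $\bm{z}^*(\bm{x})\neq\bm{0}_c$, KKT stationarity for \eqref{eq: dist_c} reads $2\bm{z}^*(\bm{x})=-\lambda_{k_\text{act}}\bigl(a^\mathcal{C}_{k_\text{act}}(\bm{x})\bigr)^{\top}$ with $\lambda_{k_\text{act}}>0$, so strict complementarity holds, the active index is locally constant (an open condition, using continuity of $\bm{z}^*$ and of $A^\mathcal{C},b^\mathcal{C}$), and $\bm{z}^*$ coincides with the projection of $\bm{0}_c$ onto the supporting hyperplane $\mathcal{H}(\bm{x})=\{y\mid a^\mathcal{C}_{k_\text{act}}(\bm{x})\,y=b^\mathcal{C}_{k_\text{act}}(\bm{x})\}$:
\[ \bm{z}^*(\bm{x})=\text{proj}_{\mathcal{H}(\bm{x})}(\bm{0}_c)=\frac{b^\mathcal{C}_{k_\text{act}}(\bm{x})\,\bigl(a^\mathcal{C}_{k_\text{act}}(\bm{x})\bigr)^{\top}}{\|a^\mathcal{C}_{k_\text{act}}(\bm{x})\|_2^2}, \]
exactly as in \eqref{eq: pd_zstar}. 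Because $a^\mathcal{C}_{k_\text{act}}(\bm{x})\neq\bm{0}$ and $a^\mathcal{C}_{k_\text{act}},b^\mathcal{C}_{k_\text{act}}$ are differentiable at $\bm{x}\notin N$, the right-hand side is differentiable there, so $\bm{z}^*(\cdot)$ is differentiable on $\mathcal{X}_f\setminus N$.

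The remaining—and main—obstacle is to show that the ``active-set switch'' set (iii) is Lebesgue-null. I would argue that off $N_{(\mathrm{i})}\cup N_{(\mathrm{ii})}$ the face lattice of $\mathcal{O}^\mathcal{C}(\bm{x})$ is locally fixed (the edges of $\mathcal{O}^\mathcal{C}$ are those of $\mathcal{O}$ and of $-\mathcal{R}(\bm{x})$ sorted by orientation) with differentiably varying facets, so the set of $\bm{x}$ at which $\text{proj}_{\mathcal{O}^\mathcal{C}(\bm{x})}(\bm{0}_c)$ transitions from one facet to an adjacent one is locally contained in a lower-dimensional zero set—this is precisely the medial-axis/active-set-change set noted in the Remark in Sec.~\ref{sec:preliminaries}. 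A fully rigorous treatment of this last point requires a nonsmooth/transversality argument and is the one step I expect to require care, consistent with the conjecture stated there. As a fallback, the conclusion also follows from the known a.e.\ differentiability of the Euclidean projection onto a convex body \cite{Singletary.etal.RAL22}, applied to the (Lipschitz in $\bm{x}$) data defining $\mathcal{O}^\mathcal{C}(\bm{x})$.
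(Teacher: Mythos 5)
Your conclusions are right, but you take a substantially longer and more explicit route than the paper. The paper's proof is two sentences: uniqueness follows because $\mathcal{O}^\mathcal{C}(\bm{x})$ is nonempty, closed, and convex (Lem.~\ref{lem: properties}), so the Euclidean projection of $\bm{0}_c$ onto it is unique \cite[Chap.~8]{Boyd.Vandenberghe.04}; continuity and a.e.\ differentiability are then obtained in one stroke by citing the parametric sensitivity result for strictly convex QPs in \cite[Thm.~1]{Amos.Kolter.ICML17}. Your uniqueness step is essentially identical. For continuity you substitute a Hausdorff-continuity/Berge argument on the constraint correspondence, and for a.e.\ differentiability you unpack what the citation hides: strict complementarity off a degenerate set, local constancy of the active index, and the closed-form projection onto the active supporting hyperplane, matching \eqref{eq: pd_zstar}. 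What your route buys is transparency --- it exhibits the exceptional set explicitly (active-set switches, parallel-edge degeneracies, non-differentiability of $A^\mathcal{C},b^\mathcal{C}$) and makes clear why the Jacobian formula \eqref{eq: dzdx} is valid off that set, which the bare citation does not. What the paper's route buys is that the measure-zero claim for the active-set-switch set --- precisely the step you flag as needing a transversality argument --- is absorbed into the cited theorem rather than proved by hand.

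One caution on your fallback: a.e.\ differentiability of the Euclidean projection $\bm{p}\mapsto\mathrm{proj}_{C}(\bm{p})$ onto a \emph{fixed} convex body is differentiability in the projected point, not in parameters deforming the set. These coincide for pure translation (where $\mathcal{O}^\mathcal{C}(\bm{x})=\mathcal{O}^\mathcal{C}_0-\bm{p}$, so $\bm{z}^*(\bm{x})=\mathrm{proj}_{\mathcal{O}^\mathcal{C}_0}(\bm{p})-\bm{p}$ and Rademacher applies to the $1$-Lipschitz projection), but not automatically for the orientation components of $\bm{x}$, where the facet normals themselves rotate. So the fallback closes the gap only for the translational case; for the full state you still need either your transversality argument or the appeal to \cite[Thm.~1]{Amos.Kolter.ICML17} together with Assumption~\ref{item:assp2}, which is consistent with the caveats the paper itself records in its Remark on orientation-induced active-set switches.
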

\begin{proof}
By Lem. 2, the \cobs~set $\mathcal{O}^\mathcal{C}(\bm{x})$ is nonempty, closed, and convex, and therefore, the Euclidean projection of the origin $\bm{0}_c$ on the set $\mathcal{O}^\mathcal{C}(\bm{x})$ is unique \cite[Chap.~8]{Boyd.Vandenberghe.04}.
As \eqref{eq: dist_c} is a strictly convex QP, its solution $\bm{z}^*(\bm{x})$ is continuous and differentiable a.e. based on \cite[Thm.~1]{Amos.Kolter.ICML17}.
\end{proof}
\begin{remark}
    \textcolor{black}{Extending Lemma 3 to the penetration depth case is deferred to future work. The min-distance case is a strictly convex QP, while penetration depth is an LP, so the differentiability results of \cite{Amos.Kolter.ICML17} do not directly apply. 
    The min-distance and penetration depth cases are complementary, and we focus on the min-distance case in the next subsection.
    While our LP formulation performs well empirically and yields stable derivative estimates under the same \mdspace~framework, a rigorous differentiability analysis of the penetration depth case is planned for future work.}
\end{remark}

\subsection{Derivative Calculation Using Differentiable Optimization}\label{sec: 5c-dh}
To enforce the CBF constraint on system dynamics, we need to find the derivative of $h(\bm{x})$. However, the CBF $h(\bm{x})$ \eqref{eq: cbf} depends on the solution to an optimization problem $\bm{z}(\bm{x})$, defined implicitly to state $\bm{x}$. Therefore, we cannot differentiate it w.r.t. $\bm{x}$ directly. Given this implicit dependence, we first apply the chain rule to express $\partial h / \partial \bm{x}$ 
as:
\begin{align}\begin{split} 
        \frac{\partial h}{\partial \bm{x}} &=\frac{\partial h}{\partial \bm{z}^*}\frac{\partial \bm{z}^*}{\partial \bm{x}},
    \label{eq: dhdx}
\end{split}\end{align}
where \textcolor{black}{$\partial h/\partial \bm{z}^* = \bm{z}{^*}^{\top}/(h+d_\text{safe})\in\mathbb{R}^{1\times d}$.}
The second term $\partial \bm{z}^*/\partial \bm{x}\in\mathbb{R}^{d\times n}$
is the Jacobian of the optimal solution $\bm{z}^*$ to \eqref{eq: dist_c}, which describes how it changes as inequality constraints of CO vary with the state $\bm{x}$. 
\begin{corollary} Let \textcolor{black}{$\bm{z}^*(\bm{x})$} be the optimal solution to the QP \eqref{eq: dist_c} \textcolor{black}{for $\bm{x}\in\mathcal{X}_f$}. Its Clarke generalized subdifferential \cite{Clarke.1975} $\partial\bm{z}^*(\bm{x})$ is nonempty \textcolor{black}{for all $\bm{x}\in\mathcal{X}_f$}, and has a single unique Jacobian for all but a measure zero set of points.
\end{corollary}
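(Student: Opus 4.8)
The plan is to obtain the corollary as a direct consequence of Lemma~3 together with the elementary properties of the Clarke generalized Jacobian of a locally Lipschitz map. First I would upgrade the ``continuous and differentiable a.e.'' conclusion of Lemma~3 to \emph{local Lipschitz continuity} of $\bm{z}^*(\cdot)$ on $\mathcal{X}_f$. Since $\bm{z}^*(\bm{x}) = \text{proj}_{\mathcal{O}^\mathcal{C}(\bm{x})}(\bm{0}_c)$ and the projection onto a fixed nonempty closed convex set is $1$-Lipschitz, the only quantity to control is the variation of the feasible polytope $\mathcal{O}^\mathcal{C}(\bm{x})$ with $\bm{x}$. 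By Lemma~2 and Assumptions~1 and~3, $\mathcal{O}^\mathcal{C}(\bm{x})$ is a uniformly bounded, nonempty, compact convex polygon with a minimal H-representation whose data $(A^\mathcal{C}(\bm{x}),b^\mathcal{C}(\bm{x}))$ is locally Lipschitz in $\bm{x}$ (and continuous across the measure-zero parallel-edge degeneracy set of Assumption~2); hence $\bm{x}\mapsto\mathcal{O}^\mathcal{C}(\bm{x})$ is locally Lipschitz in the Hausdorff metric. Because the objective $\|\bm{z}\|_2^2$ in \eqref{eq: dist_c} is strongly convex, the projection of the fixed point $\bm{0}_c$ onto this Hausdorff-Lipschitz family of convex polytopes is itself locally Lipschitz on $\mathcal{X}_f$ — this is the quantitative counterpart of the uniqueness already established in Lemma~3, and it rules out jumps of $\bm{z}^*$ even across active-set changes (at a switch configuration the two candidate projections coincide by continuity, and each active-set branch is smooth).

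Given local Lipschitz continuity on $\mathcal{X}_f$, the first assertion is immediate: by Clarke's construction \cite{Clarke.1975}, the generalized Jacobian $\partial\bm{z}^*(\bm{x})$ equals the convex hull of all limits $\lim_k D\bm{z}^*(\bm{x}_k)$ along sequences $\bm{x}_k\to\bm{x}$ on which $\bm{z}^*$ is differentiable, and this set is nonempty, convex, and compact at every $\bm{x}\in\mathcal{X}_f$. For the second assertion I would invoke Rademacher's theorem (equivalently, the differentiability-a.e.\ conclusion of Lemma~3, which rests on \cite{Amos.Kolter.ICML17}): the set $N\subset\mathcal{X}_f$ of points where $\bm{z}^*$ fails to be differentiable has Lebesgue measure zero, and at every $\bm{x}\in\mathcal{X}_f\setminus N$ the generalized Jacobian collapses to the singleton $\{D\bm{z}^*(\bm{x})\}$ — a single, unique Jacobian. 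Finally $N$ is contained in the union of the measure-zero degeneracy set of Assumption~2 and the measure-zero set of active-set-switch configurations, so the exceptional set is indeed measure zero, which yields the stated ``for all but a measure zero set of points.''

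The main obstacle is the first step: converting the ``differentiable a.e.'' hypothesis on the \cobs~data into local Lipschitz continuity of $\bm{z}^*(\cdot)$ that holds \emph{at every} point of $\mathcal{X}_f$, not merely a.e., since both the nonemptiness claim and the Clarke construction are pointwise. I expect this to go through because the \cobs~of two uniformly bounded polytopes with bounded minimal H-reps is itself uniformly bounded, with representation data that is Lipschitz on each non-degenerate piece and continuous across the degeneracy set, so $d_H(\mathcal{O}^\mathcal{C}(\bm{x}),\mathcal{O}^\mathcal{C}(\bm{x}'))$ is controlled by $\|\bm{x}-\bm{x}'\|$ uniformly on compacta; combining this with standard Lipschitz-stability estimates for strongly convex parametric QPs with Lipschitz data (or, more robustly, with Hoffman-type polyhedral bounds and the fact that on $\mathcal{X}_f$ the origin stays at positive distance from $\mathcal{O}^\mathcal{C}$) yields the required local Lipschitz estimate for $\bm{z}^*$. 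I would also state explicitly that the corollary is deliberately scoped to $\mathcal{X}_f$: in the penetration case $\bm{z}^*$ is only locally Lipschitz (see the remark after Lemma~3), so the clean Clarke-Jacobian statement is asserted only where the strictly convex QP \eqref{eq: dist_c} governs $\bm{z}^*$.
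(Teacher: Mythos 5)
Your proposal reaches the right conclusion but by a genuinely different and much longer route than the paper. The paper's proof is a one-line citation: since the objective of \eqref{eq: dist_c} is strictly convex, the conclusion ``follows directly from \cite[Thm.~1]{Amos.Kolter.ICML17}'', which already packages continuity, a.e.\ differentiability, and a nonempty subdifferential with a unique Jacobian off a null set for solutions of strictly convex parametric QPs. You instead reconstruct this from first principles: local Lipschitz continuity of $\bm{z}^*(\cdot)$ on $\mathcal{X}_f$, then Rademacher and Clarke's limiting construction. What your route buys is transparency about \emph{why} the exceptional set is null (it is contained in the parallel-edge degeneracy set plus active-set switches) and an explicit justification of the pointwise nonemptiness claim, which the citation leaves implicit; what it costs is that you must supply the Lipschitz estimate yourself, which is the hardest step and the one the paper never has to touch.

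Two soft spots in your argument deserve attention. First, Hausdorff-Lipschitz dependence of a convex set on a parameter yields in general only H\"older-$1/2$ stability of the projection of a fixed point, not Lipschitz stability; the clean way to get local Lipschitz continuity here is the parametric-QP route you mention in passing (Hoffman-type bounds or Robinson's stability results for strongly convex QPs with Lipschitz polyhedral data and a uniform Slater point, which Assumption~3 provides via nonempty interior), so that alternative should be promoted from a parenthetical to the main argument. Second, differentiability of $\bm{z}^*$ at a point does not by itself collapse the Clarke generalized Jacobian to a singleton there --- that requires strict (e.g.\ continuous) differentiability. The conclusion still holds because the QP solution map is piecewise $C^1$, hence $C^1$ on the open complement of the measure-zero active-set-switch set, but the inference ``differentiable at $\bm{x}$ $\Rightarrow$ $\partial\bm{z}^*(\bm{x})=\{D\bm{z}^*(\bm{x})\}$'' as written is not valid and should be replaced by the piecewise-smoothness argument. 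Your closing observation that the corollary is deliberately scoped to $\mathcal{X}_f$ (excluding the penetration LP case) matches the paper's intent exactly.
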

\begin{proof}
    This follows directly from \cite[Thm.~1]{Amos.Kolter.ICML17}, as the objective function of QP \eqref{eq: dist_c} is strictly convex.
\end{proof}
Since \eqref{eq: dist_c} is a QP in standard form, the Jacobian can be computed using the framework introduced in \cite{Amos.Kolter.ICML17}. We start by finding the Karush-Kuhn-Tucker (KKT) conditions of \eqref{eq: dist_c}, where the stationarity and complementary slackness conditions can be compactly expressed in matrix form as:
\begin{align*}\begin{split}
    G(\bm{z},\boldsymbol{\lambda}) = \begin{bmatrix}
    2I_{d}\bm{z}+({A^\mathcal{C}})^{\top}\boldsymbol{\lambda}
     \\
    D(\boldsymbol{\lambda})(A^\mathcal{C}\bm{z}-b^\mathcal{C})
    \end{bmatrix},
\end{split}\end{align*}
where $I_d$ is the $d$-dimensional identity matrix, $\boldsymbol{\lambda}\in\mathbb{R}^{\ell_\mathcal{C}}$ is the dual variable of \eqref{eq: dist_c}, and $D(\cdot)$ creates a diagonal matrix from a vector. Let $\bm{\xi}:=(\bm{z},\boldsymbol{\lambda})$. Thus, a convex optimization solver can be viewed as a root-finding method that seeks the optimal solution $\bm{\xi}^*(\bm{x})$
by solving $G(\bm{\xi}^*)=0$
(suppressing the dependency of $\bm{\xi}$ on $\bm{x}$ for brevity).
The partial derivative of $\bm{z}^*$ w.r.t. $\bm{x}$ can be computed by differentiating the KKT conditions using the implicit function theorem \cite{Dontchev.Rockafellar.09}:
\begin{align*}\begin{split}
    D_{\bm{x}} G(\bm{\xi}^*) = 0\,\,
    \Rightarrow \,\,\partial_{\bm{\xi}}G(\bm{\xi}^*)\,\,\partial_{{\bm{x}}}(\bm{\xi}^*)+\partial_{\bm{x}}G(\bm{\xi}^*)=0.
\end{split}\end{align*}
Rearranging the matrix equation yields 
\begin{align}
\partial_{\bm{x}}(\bm{\xi}^*) &= -\left[ \partial_{\bm{\xi}}G(\bm{\xi}^*)\right]^{-1} \partial_{\bm{x}}G(\bm{\xi}^*), \label{eq: dzdx} \\[0.5em]
\begin{split}
    \partial_{\bm{\xi}}G(\bm{\xi}^*) &=
    \begin{bmatrix}
    2I_{d} & ({A^\mathcal{C}})^{\top} \\
    D(\boldsymbol{\lambda}^*)A^\mathcal{C} & D(A^\mathcal{C}\bm{z}^*-b^\mathcal{C})
    \end{bmatrix}, \\[1ex]
    \partial_{\bm{x}}G(\bm{\xi}^*) &=
    \begin{bmatrix}
    d_{\bm{x}}({A^\mathcal{C}})^{\top} \boldsymbol{\lambda}^* \\
    D(\boldsymbol{\lambda}^*)\left(d_{\bm{x}}A^\mathcal{C} \bm{z}^*-d_{\bm{x}} b^\mathcal{C} \right)
    \end{bmatrix},
    \end{split}\label{eq: dG}
\end{align}
where $d_{\bm{x}}A^\mathcal{C}\in\mathbb{R}^{\ell_\mathcal{C}\times d\times n}$ is a 3D tensor and $d_{\bm{x}}b^\mathcal{C}\in\mathbb{R}^{\ell_\mathcal{C}\times n}$ represents how the bounding constraints of \cobs, $A^\mathcal{C},b^\mathcal{C}$, changes w.r.t. state $\bm{x}$. 
Note that  $\bm{z}^*,\boldsymbol{\lambda}^*,A^\mathcal{C},b^\mathcal{C}$ all depend on the state $\bm{x}$, which we denote compactly as $\bm{z}^*(\bm{x}),\boldsymbol{\lambda}^*(\bm{x}),A^\mathcal{C}(\bm{x}),b^\mathcal{C}(\bm{x})$. The Jacobian of the optimal solution $\bm{z}^*$ can be extracted from the associated block \eqref{eq: dzdx}, which also provides the derivatives of dual variables. The partial derivative of $h$ can be computed using 
\eqref{eq: dhdx} to \labelcref{eq: dG}.
\begin{remark} 
As \eqref{eq: dG} is 
derived by differentiating the KKT conditions; complementary slackness holds. Thus, it is sufficient to consider only the active constraints when computing the derivatives. That is, we can replace $A^\mathcal{C},b^\mathcal{C}$ with $A^\mathcal{C}_{k_\text{act}},b^\mathcal{C}_{k_\text{act}}$.
\end{remark}

\subsection{Configuration Obstacle Computation}\label{sec: 5d-cso}
Consider a polygonal robot modeled as a rigid body in SE(2), with states 
(position and orientation) is $\bm{x}=[x,y,\theta]^{\top}$, and a nonzero initial configuration $\bm{x}_0=[x_0,y_0,\theta_0]^{\top}$. Its shape in H-rep can be parameterized by its configuration in $\mathcal{W}$-space as $\mathcal{R}(\bm{x})=\{y\mid A(\bm{x})y\leq b(\bm{x})\}$, where $A(\bm{x})=AR(\theta-\theta_0)^{\top},~b(\bm{x})= b+A(\bm{x})\bm{p}-A\bm{p}_0$ with $\bm{p}=[x,y]^{\top}$, $\bm{p}_0=[x_0,y_0]^{\top}$, and $A(\bm{x}_0),b(\bm{x}_0)$ defining the robot's shape at the initial configuration. $R(\theta) = [\cos\theta, -\sin\theta; \sin\theta, \cos\theta]$ is a rotation matrix.

However, this formulation does not extend directly to the \cobs~in \mdspace. In the special case of pure translation, the \cobs's shape remains unchanged, and its bounding hyperplanes translate in the opposite direction of the robot’s motion by definition \eqref{eq: cso}. Thus, the derivatives of the \cobs~bounding constraints w.r.t. position $\bm{p}$, have a closed-form expression: $d_{\bm{x}}A^\mathcal{C}=\bm{0}_{\ell_\mathcal{C}\times 2\times 2 },~d_{\bm{x}}b^\mathcal{C}=-A^\mathcal{C}$.

\textcolor{black}{In contrast, with rotation the bounding hyperplanes change both slope and order, causing nontrivial CO deformations. While the Minkowski sum can be computed via linear projection  \cite{Sadraddini.Tedrake.CDC19}, this method still does not provide an analytical expression for \cobs. Deriving an explicit relationship between the robot-obstacle representations in $\mathcal{W}$-space and the \cobs~in \mdspace~remains an open problem for future research.
}

\subsection{Controller Synthesis Using CLF-CBF-QP}
We implement \eqref{eq: CLF-CBF-QP} by discretizing the horizon \textcolor{black}{$[0,T]$ into equal intervals of $\Delta t$}, holding the state fixed at the start of each step. At time step $t=k\Delta t$, the control is held constant over $[t_k,t_{k+1})$, rendering the constraints affine in $\bm{u}$. We then solve the standard CLF-CBF-QP \cite{Ames.etal.TAC17} with $\bm{x}(t_k)$ to compute the optimal control $\bm{u}(t_k)=\bm{u}_k$, and apply $\bm{u}_k$ on $[t_k,t_{k+1})$ to propagate the dynamics \eqref{eq: ctrl-affine}.
\begin{align}\begin{split}
 \underset{\bm{u}(t),\delta(t)}{\min}
 \quad & \int_0^T\bm{u}(t)^{\top}\bm{u}(t)+p\delta(t)^2\\[0.2em]
 \text{subject to}\quad & L_fh(\bm{x})+ L_gh(\bm{x})\bm{u}+\gamma h(\bm{x})\geq \epsilon,\\ &L_fV(\bm{x})+ L_gV(\bm{x})\bm{u}+c V(\bm{x})\leq\delta,\\& \bm{u}_{\min}\leq\bm{u}\leq\bm{u}_{\max} ,\label{eq: CLF-CBF-QP}\end{split}\end{align}
where $\delta$ is a slack variable, used to relax the CLF constraint and $p > 0$ is a penalty. $\epsilon>0$ is a small constant to ensure the CBF will change sign when starting in the unsafe set in finite time. We refer the interested readers to \cite{10886196} for detailed discussion on finite-time convergence guarantee.

\section{Case Study and Implementation}
We demonstrate the proposed CBFs by solving \mbox{Problem \ref{prob}} in three case studies. We consider single-integrators (\ref{sec: case-single}) and unicycles that start in collision (\ref{sec: case-unicycle-collision}) or involve multiple obstacles (\ref{sec: case-multiple}). All simulations are conducted in MATLAB, using ‘quadprog’ to solve both signed distance and CLF-CBF-QP problems, and ‘ode45’ to integrate system dynamics at a sampling rate of 100 Hz, on a PC with an Intel i7-8700 6-core of 3.2 GHz CPU and 32 GB of RAM. 
(See video at \textnormal{\url{https://youtu.be/3Dh0gtDW8bE}} .)

\begin{figure}[b!]
	\captionsetup[subfigure]{labelformat=empty,aboveskip=-2.5pt,belowskip=-10pt}
	\centering
	\begin{subfigure}{\linewidth}
		\centering
		\includegraphics[width=.94\linewidth]{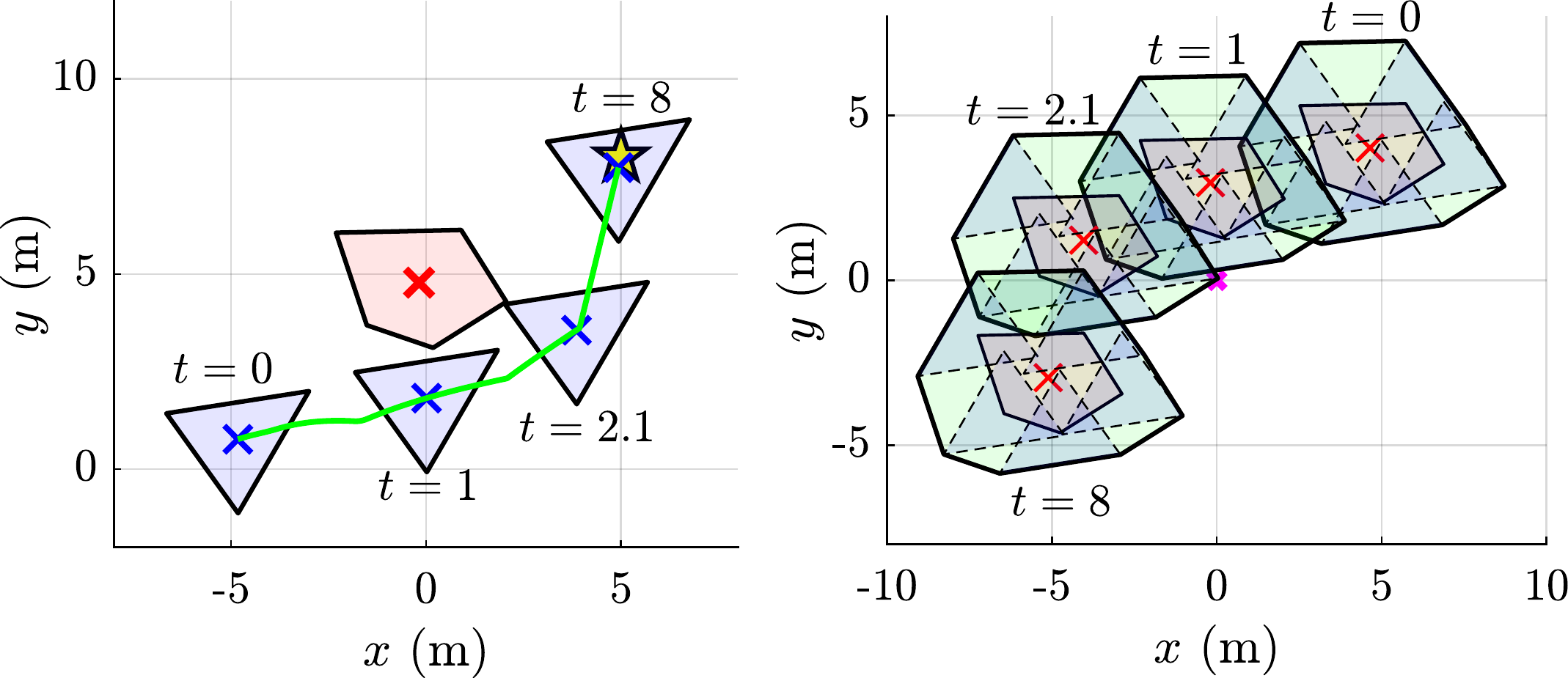}
            \caption{}
		\label{fig: case1-w-c}
	\end{subfigure}
        \\[0.3em]
        \begin{subfigure}{\linewidth}
		\centering
		\includegraphics[width=.94\linewidth]{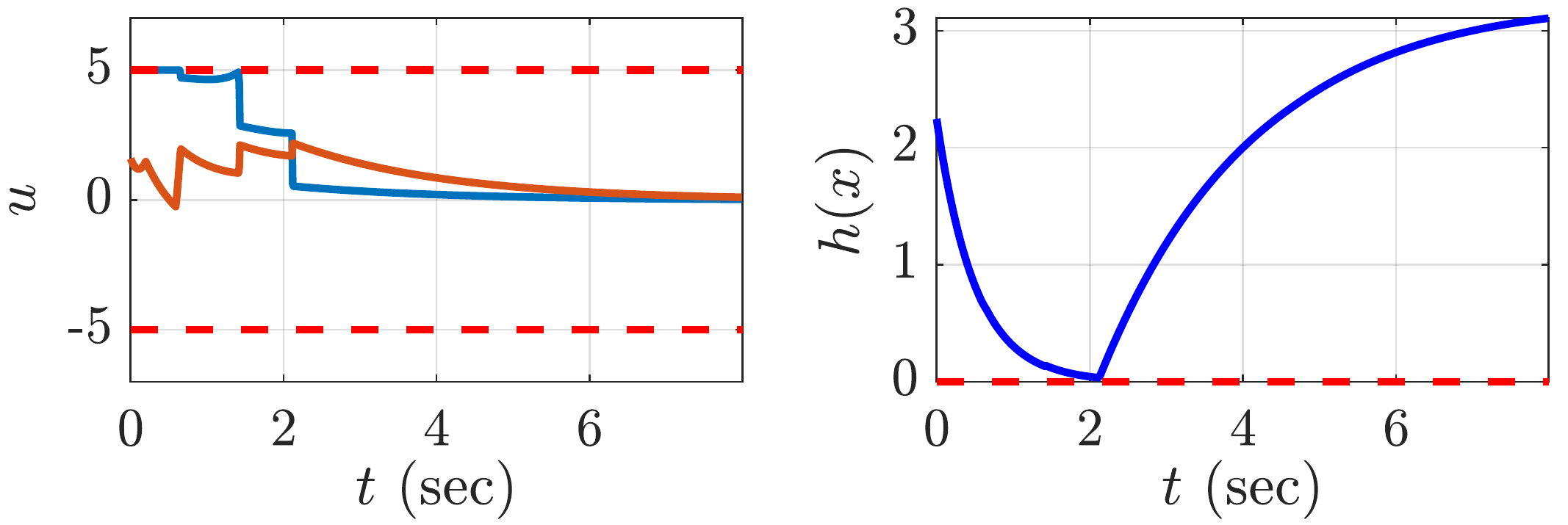}
		\caption{}
		\label{fig: case1-u-bx}
	\end{subfigure}
	\caption{\textbf{Top-Left:} The robot’s trajectory (green) moving left-to-right from the start (${t=0}$) to the goal (${t=8}$). Snapshots show the robot (blue) avoiding the obstacle (red). \textbf{Top-Right:} The corresponding motion of the \cobs~(green) moving right-to-left from ${t=0}$ to ${t=8}$ in \mdspace~ (see video \textnormal{\url{https://youtu.be/3Dh0gtDW8bE}}). \textbf{Bottom:} Control input $\bm{u}(t)$, and CBF $h(\bm{x})$ profiles. Both control inputs satisfy actuator limits and safety constraints.}\label{fig: case1}
\end{figure}

\subsection{Case I: Single-Integrator Model (Pure Translation)} \label{sec: case-single}
We first consider the scenario of a triangular robot $\mathcal{R}$ with single-integrator dynamics translating to the desired position $\bm{x}_d$ while avoiding a polygonal obstacle $\mathcal{O}$: $\bm{x} =[x,y]^{\top},\,\,\dot{\bm{x}}=\bm{u}=[u_1,u_2]^{\top}$, where $x,y$ denote the position, and $u_1,u_2$ are the control inputs (linear velocities). 
The dynamics can be rewritten in form of \eqref{eq: ctrl-affine} with $f(\bm{x})=\bm{0}_{2\times 1}$ and $g(\bm{x})=\bm{I}_2$. We use the proposed SDF-CBF \eqref{eq: cbf} to impose safety and a CLF $V(\bm{x})=(\bm{x}-\bm{x}_d)^2$ to reach $\bm{x}_d=(5,8)$. The other parameters are $\bm{x}(0)=(-4.83,0.77), \bm{u}_{\max}=-\bm{u}_{\min}=5$m/s, $d_\text{safe}=0, p=10, c=1, \gamma = 3$, $\epsilon=0$.
The derivative of the CBF is computed using \eqref{eq: dzdx} and \eqref{eq: dG}. Each control loop involves computing the \cobs~\eqref{eq: cso} and solving a QP for minimum distance \eqref{eq: dist_c} and CLF-CBF-QP \eqref{eq: CLF-CBF-QP}. Loops run at 100 Hz and take 7.38 ms on average. The simulation results (Fig.~\ref{fig: case1}) demonstrate the non-conservativeness of the proposed CBF, i.e., the robot safely moves in close proximity to the obstacle boundary.

\begin{figure*}[t!]
    \begin{minipage}{5.2in}
    \includegraphics[width=\textwidth]{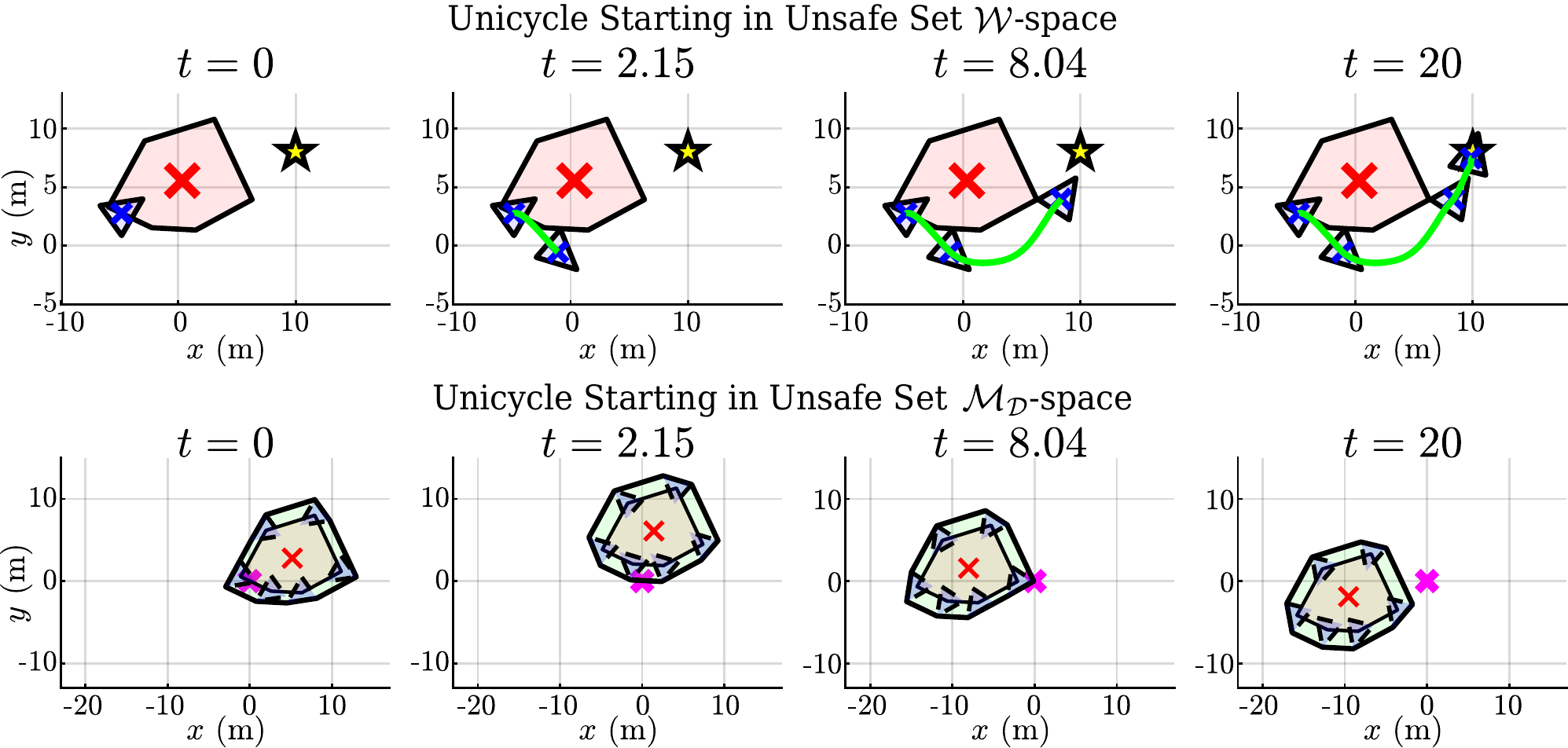}    
    \end{minipage}
    \begin{minipage}{1.7in}
    \includegraphics[width=.97\textwidth]{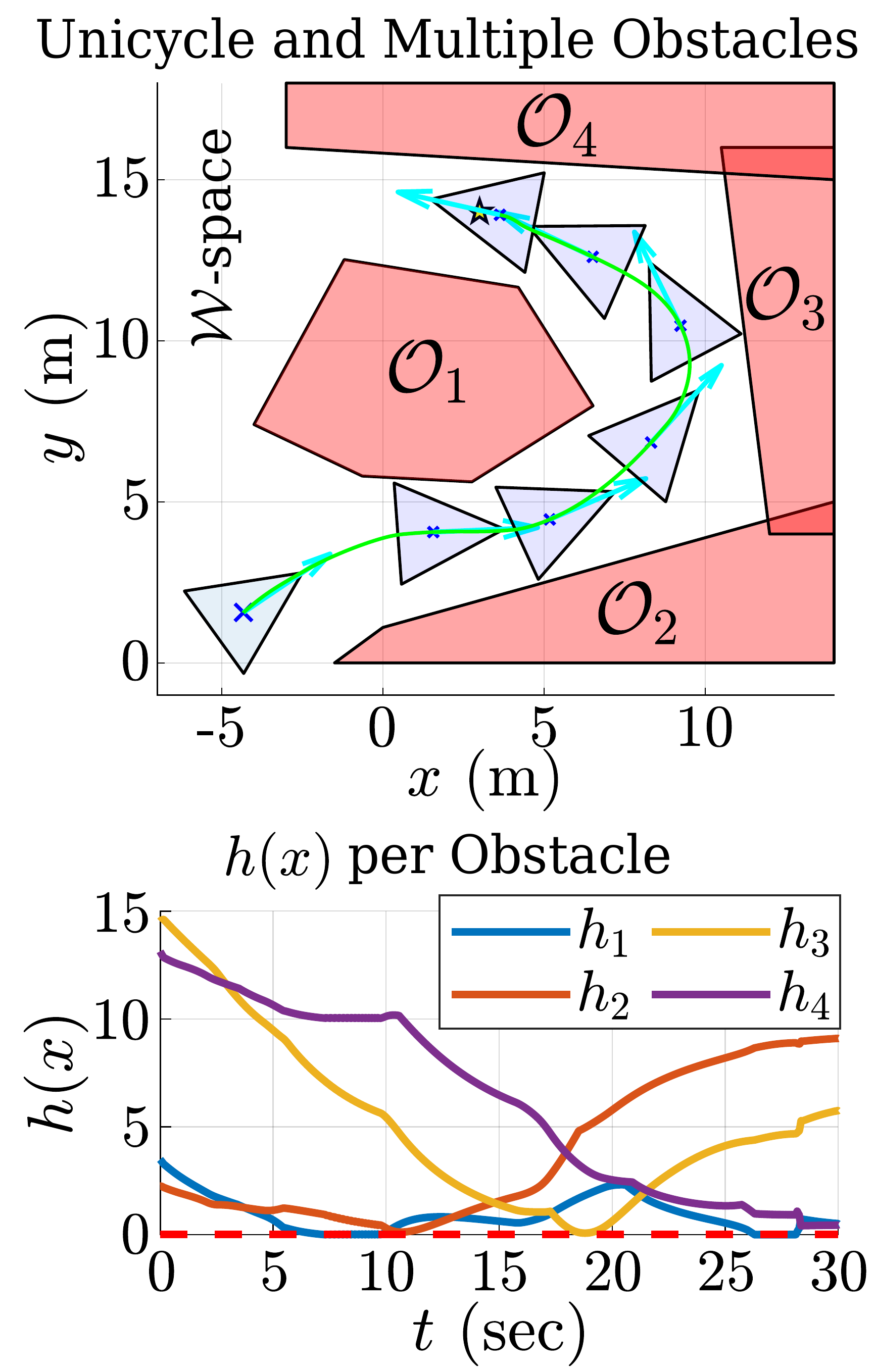}
    \end{minipage}
    \caption{\textbf{Left}: Four columns depict movement over time. At $t=0$ The robot starts in collision with the obstacle in $\mathcal{W}$-space (top), and the corresponding \cobs~contains the origin in \mdspace~ (bottom). As the robot moves out of the obstacle, the \cobs~moves away from the origin ($t=0$ to $t=2.15$ top and bottom). The collision is resolved by time $t=2.15$ seconds).
    \textbf{Right-Top:} Snapshots of robot motion in $\mathcal{W}$-space demonstrate the non-conservativeness of the proposed framework. \textbf{Right-Bottom:} The CBFs $h_i(\bm{x})$ for each obstacle stay above the safety margin over time, thereby guaranteeing safety.}
    \label{fig: unsafe_obstacle_combined}
\end{figure*}

\subsection{Case II: Unicycle Model}\label{sec: case-unicycle}
Now we consider the unicycle dynamics defined by ${\dot{x}=v\cos{\theta}}$, ${\dot{y}=v\sin{\theta}}$, ${\dot{\theta}=u_1}$, and ${\dot{v}=u_2}$,
where $x, y$ denote the position, $\theta$ is the orientation, $v$ is the linear speed, and $u_1, u_2$ are the two control inputs (turning rate and acceleration). The dynamics can be rewritten in form of \eqref{eq: ctrl-affine} with $\bm{x}=[x,y,\theta,v]^{\top}, \bm{u}=[u_1,u_2]^{\top}, f(\bm{x})=[v\cos{\theta},v\sin{\theta},0,0]^{\top}$ and $g(\bm{x})=[\bm{0}_{2\times 2};I_2]$.

We highlight that the \cobs~depends on both position and orientation, unlike $\mathcal{W}$-space obstacles, which depend only on position. Thus, high-order CBF \cite{Xiao.Belta.TAC22} or auxiliary-variable CBF \cite{Liu.etal.23} is unnecessary to handle the high relative degree constraint. We use two CLFs $V_1(\bm{x})=(\theta-\tan^{-1}(\frac{y_d-y}{x_d-x}))^{2}$, $V_2(\bm{x})=(v-v_d)^{2}$ (similar to \cite{Xiao.Belta.TAC22}) to reach the goal.

As discussed in Sec.~\ref{sec: 5d-cso}, unlike the purely translational case, we do not have an analytical expression of \cobs~in terms of system state $\bm{x}$. Instead, at each time step, we compute \cobs~from the robot's current configuration (linear time; see Lem.~\ref{lem: properties}), and approximate the derivative of its bounding constraints numerically. We consider two scenarios: (1) the robot starts in an unsafe set (Fig.~\ref{fig: unsafe_obstacle_combined}-Left), and (2) the robot navigates among multiple obstacles (Fig.~\ref{fig: unsafe_obstacle_combined}-Right).

\subsubsection{Unicycle Starting in Unsafe Set (In Collision)}\label{sec: case-unicycle-collision}
The robot starts in collision with the obstacle; thus, the origin lies within the \cobs~in \mdspace~(see \nth{1}-column of Fig.~\ref{fig: unsafe_obstacle_combined}). Note that the CBF's use of SDF between two polygonal sets enables the robot to leave the unsafe set. The parameters used in this case are $\bm{x}_d=(10,8), v_d = 2$m/s, $u_{1,{\max}}=-u_{1,{\min}}=5$ rad/s, $u_{2,{\max}}=-u_{2,{\min}}=8$m/s$^2$, $d_\text{safe}=0.02, p=8, c=5, \gamma = 0.8, \epsilon=10^{-6}$. The robot trajectory is shown in Fig.~\ref{fig: unsafe_obstacle_combined}, which demonstrates the proposed framework is able to bring the robot starting in the unsafe set back to the safe set. The computation time of each control loop is 7.85 ms on average.

\subsubsection{Unicycle Collision Avoidance with Multiple Obstacles} \label{sec: case-multiple}
For the multiple obstacles scenario, the safety is guaranteed by ensuring that safety constraint is satisfied w.r.t. each Obstacle $\mathcal{O}_i, i\in\{1,\ldots,4\}$, as mentioned in Rem.~\ref{rem2}, i.e.,
$ 
    h_i(\bm{x})=\text{sd}(\mathcal{R}(\bm{x}),\mathcal{O}_i)-d_\text{safe}=\text{sd}(\bm{0}_c,\mathcal{O}^\mathcal{C}_i(\bm{x}))-d_\text{safe}\geq0.
$ 
The associated CBF constraints are enforced in the CLF-CBF-QP as $L_fh_i(\bm{x})+ L_gh_i(\bm{x})\bm{u}+\gamma h_i(\bm{x})\geq \epsilon$
The parameters used are the same as in the previous setting, except for $\bm{x}_d=(3,13.5), v_d = 1.8$m/s, $p=10, c=6, \gamma = 0.3, \epsilon=0$. 

Our CBF framework allows the robot to navigate safely through a simple 
passage while reaching the goal position (Fig.~\ref{fig: unsafe_obstacle_combined}-Right-Top).  The time evolution of each CBF $h_i(\bm{x})$ (Fig.~\ref{fig: unsafe_obstacle_combined}-Right-Bottom) shows that all values remain positive throughout the trajectory, indicating that the robot is collision-free w.r.t each obstacle over time.
The average computation time per iteration is 19.63 ms. 
\removeForSpace{%
This increase versus the single-obstacle setting is due to the simultaneous enforcement of constraints from all obstacles and the need to solve five QPs (four for distance computation and one for CLF-CBF-QP) per iteration.
As suggested in \cite{Thirugnanam.etal.ACC22}), a hybrid approach that uses hypersphere approximations for distant obstacles and polytopes for nearby obstacles may increase computational efficiency while still enabling obstacle avoidance in tight environments. 
}





\removeForSpace{%
\subsection{Discussion}
The proposed CBF framework has several benefits, including the ability to enforce non-conservative safety constraints and to incorporate penetration depth, and potential challenges. 
As mentioned in Sec.~\ref{sec: 5d-cso}, the derivative of the bounding constraints of the \cobs~is currently approximated numerically due to the lack of an analytical expression for the \cobs~in terms of system state. This may cause instability or inaccuracies when rotational motion is more significant than translational motion.

The proposed framework can, in principle, be extended to 3D $\mathcal{W}$- and \mdspace s. 
The main challenge in 3D lies in how the robot's motion, particularly its rotation, affects the \cobs.

In 2D, computing the Minkowski difference of two convex polygons is straightforward, as the shapes are characterized by vertices and edges. The corresponding \cobs~can be efficiently computed by merging their edges sorted by angle, similar to the merge step in merge-sort algorithm. In 3D, the interaction between edges and faces introduces additional complexity. The Minkowski difference of 3D polytopes can produce new faces not only from point–face/face-point but also from edge–edge interactions, making the resulting \cobs~more challenging to represent. 
} 

\section{Conclusion and Future Work}
This paper presents a new CBF framework based on the exact SDF between polygons, obtained via convex optimization in \mdspace. We have shown that, in the minimum distance case, the proposed CBF is differentiable a.e. and always has a well-defined subdifferential. For penetration depth, we empirically show that the proposed optimization reformulation is compatible with our CBF framework. We validate our approach in three scenarios that highlight non-conservative maneuvering, collision recovery, and applicability to multi-obstacle environment. \textcolor{black}{
Future work includes deriving analytical expressions of \cobs~bounding constraints for rotational cases and performing a formal nonsmooth analysis for the penetration depth case.}




\section*{ACKNOWLEDGMENT}
We thank David Mount at the University of Maryland for his valuable insights on Minkowski operation, and Akshay Thirugnanam at the University of California, Berkeley for helpful discussions on robot parameterization. This work was partially funded by the NSF under grant NSF GCR 2219101.
\bibliographystyle{ieeetr}        
\bibliography{references}  

\begin{thebibliography}{10}

\bibitem{Ames.etal.ECC19}
A.~D. Ames, S.~Coogan, M.~Egerstedt, G.~Notomista, K.~Sreenath, and P.~Tabuada, ``Control barrier functions: Theory and applications,'' in {\em European Control Conference}, pp.~3420--3431, IEEE, 2019.

\bibitem{Ames.etal.TAC17}
A.~D. Ames, X.~Xu, J.~W. Grizzle, and P.~Tabuada, ``Control barrier function based quadratic programs for safety critical systems,'' {\em IEEE Trans. on Automatic Control}, vol.~62, no.~8, pp.~3861--3876, 2017.

\bibitem{Nguyen.Sreenath.ACC16}
Q.~Nguyen and K.~Sreenath, ``Exponential control barrier functions for enforcing high relative-degree safety-critical constraints,'' in {\em American Control Conference}, pp.~322--328, 2016.

\bibitem{Xiao.Belta.TAC22}
W.~Xiao and C.~Belta, ``High-order control barrier functions,'' {\em IEEE Trans. on Automatic Control}, vol.~67, no.~7, pp.~3655--3662, 2022.

\bibitem{Liu.etal.23}
S.~Liu, W.~Xiao, and C.~A. Belta, ``Auxiliary-variable adaptive control barrier functions for safety critical systems,'' in {\em Conference on Decision and Control}, pp.~8602--8607, IEEE, 2023.

\bibitem{10886196}
S.~Liu, W.~Xiao, and C.~A. Belta, ``Auxiliary-variable adaptive control lyapunov barrier functions for spatio-temporally constrained safety-critical applications,'' in {\em Conference on Decision and Control}, pp.~8098--8104, IEEE, 2024.

\bibitem{liu2025auxiliary}
S.~Liu, W.~Xiao, and C.~A. Belta, ``Auxiliary-variable adaptive control barrier functions,'' {\em arXiv preprint arXiv:2502.15026}, 2025.

\bibitem{Agrawal.Sreenath.RSS17}
A.~Agrawal and K.~Sreenath, ``Discrete control barrier functions for safety-critical control of discrete systems with application to bipedal robot navigation.,'' in {\em Robotics: Science and Systems}, vol.~13, pp.~1--10, Cambridge, MA, USA, 2017.

\bibitem{Thyri.etal.CCTA20}
E.~H. Thyri, E.~A. Basso, M.~Breivik, K.~Y. Pettersen, R.~Skjetne, and A.~M. Lekkas, ``Reactive collision avoidance for asvs based on control barrier functions,'' in {\em Conference on Control Technology and Applications}, pp.~380--387, IEEE, 2020.

\bibitem{Khazoom.etal.Humanoids22}
C.~Khazoom, D.~Gonzalez-Diaz, Y.~Ding, and S.~Kim, ``Humanoid self-collision avoidance using whole-body control with control barrier functions,'' in {\em International Conference on Humanoid Robots}, pp.~558--565, IEEE, 2022.

\bibitem{Thirugnanam.etal.ICRA22}
A.~Thirugnanam, J.~Zeng, and K.~Sreenath, ``Safety-critical control and planning for obstacle avoidance between polytopes with control barrier functions,'' in {\em International Conference on Robotics and Automation}, pp.~286--292, 2022.

\bibitem{Tayal.etal.ACC24}
M.~Tayal, R.~Singh, J.~Keshavan, and S.~Kolathaya, ``Control barrier functions in dynamic uavs for kinematic obstacle avoidance: A collision cone approach,'' in {\em American Control Conference}, pp.~3722--3727, IEEE, 2024.

\bibitem{Liu.Mao.arXiv24}
S.~Liu, Y.~Mao, and C.~A. Belta, ``Safety-critical planning and control for dynamic obstacle avoidance using control barrier functions,'' {\em arXiv preprint arXiv:2403.19122}, 2024.

\bibitem{Molnar.etal.TCST25}
T.~G. Molnar, S.~K. Kannan, J.~Cunningham, K.~Dunlap, K.~L. Hobbs, and A.~D. Ames, ``Collision avoidance and geofencing for fixed-wing aircraft with control barrier functions,'' {\em IEEE Trans. on Control Systems Technology}, pp.~1--16, 2025.

\bibitem{Chen.etal.TCST18}
Y.~Chen, H.~Peng, and J.~Grizzle, ``Obstacle avoidance for low-speed autonomous vehicles with barrier function,'' {\em IEEE Trans. on Control Systems Technology}, vol.~26, no.~1, pp.~194--206, 2018.

\bibitem{Verginis.Dimarogonas.LCSS19}
C.~K. Verginis and D.~V. Dimarogonas, ``Closed-form barrier functions for multi-agent ellipsoidal systems with uncertain lagrangian dynamics,'' {\em IEEE Control Systems Letters}, vol.~3, no.~3, pp.~727--732, 2019.

\bibitem{Funada.etal.TCST25}
R.~Funada, K.~Nishimoto, T.~Ibuki, and M.~Sampei, ``Collision avoidance for ellipsoidal rigid bodies with control barrier functions designed from rotating supporting hyperplanes,'' {\em IEEE Trans. on Control Systems Technology}, vol.~33, no.~1, pp.~148--164, 2025.

\bibitem{Thirugnanam.etal.ACC22}
A.~Thirugnanam, J.~Zeng, and K.~Sreenath, ``Duality-based convex optimization for real-time obstacle avoidance between polytopes with control barrier functions,'' in {\em American Control Conference}, pp.~2239--2246, 2022.

\bibitem{Thirugnanam.etal.arXiv23}
A.~Thirugnanam, J.~Zeng, and K.~Sreenath, ``Nonsmooth control barrier functions for obstacle avoidance between convex regions,'' {\em arXiv preprint arXiv:2306.13259}, 2023.

\bibitem{Singletary.etal.RAL22}
A.~Singletary, W.~Guffey, T.~G. Molnar, R.~Sinnet, and A.~D. Ames, ``Safety-critical manipulation for collision-free food preparation,'' {\em IEEE Robotics and Automation Letters}, vol.~7, no.~4, pp.~10954--10961, 2022.

\bibitem{Peng.etal.ICRA23}
C.~Peng, O.~Donca, G.~Castillo, and A.~Hereid, ``Safe bipedal path planning via control barrier functions for polynomial shape obstacles estimated using logistic regression,'' in {\em International Conference on Robotics and Automation}, pp.~3649--3655, IEEE, 2023.

\bibitem{Wei.etal.RAL24}
S.~Wei, B.~Dai, R.~Khorrambakht, P.~Krishnamurthy, and F.~Khorrami, ``Diffocclusion: Differentiable optimization based control barrier functions for occlusion-free visual servoing,'' {\em IEEE Robotics and Automation Letters}, 2024.

\bibitem{Wu.etal.optfree.arxiv25}
S.~Wu, Y.~Fang, N.~Sun, B.~Lu, X.~Liang, and Y.~Zhao, ``Optimization-free smooth control barrier function for polygonal collision avoidance,'' {\em arXiv preprint arXiv:2502.16293}, 2025.

\bibitem{gjk}
E.~G. Gilbert, D.~W. Johnson, and S.~S. Keerthi, ``A fast procedure for computing the distance between complex objects in three-dimensional space,'' {\em IEEE Journal on Robotics and Automation}, vol.~4, no.~2, pp.~193--203, 1988.

\bibitem{epa}
G.~Van Den~Bergen, ``Proximity queries and penetration depth computation on 3d game objects,'' in {\em Game Developers Conference}, vol.~170, p.~209, 2001.

\bibitem{Lien.08}
J.-M. Lien, ``Hybrid motion planning using minkowski sums,'' {\em Proceedings of Robotics: Science and Systems}, 2008.

\bibitem{Ruan.etal.TRO23}
S.~Ruan, K.~L. Poblete, H.~Wu, Q.~Ma, and G.~S. Chirikjian, ``Efficient path planning in narrow passages for robots with ellipsoidal components,'' {\em IEEE Trans. on Robotics}, vol.~39, no.~1, pp.~110--127, 2023.

\bibitem{Lutz.Meurer.CCTA21}
M.~Lutz and T.~Meurer, ``Efficient formulation of collision avoidance constraints in optimization based trajectory planning and control,'' in {\em Conference on Control Technology and Applications}, pp.~228--233, IEEE, 2021.

\bibitem{Montaut.etal.RSS22}
L.~Montaut, Q.~Le~Lidec, V.~Petrik, J.~Sivic, and J.~Carpentier, ``Collision detection accelerated: An optimization perspective,'' in {\em Robotics: Science and Systems}, 2022.

\bibitem{Guthrie.CDC22}
J.~Guthrie, ``A differentiable signed distance representation for continuous collision avoidance in optimization-based motion planning,'' in {\em Conference on Decision and Control}, pp.~7214--7221, IEEE, 2022.

\bibitem{Glotfelter.etal.LCSS17}
P.~Glotfelter, J.~Cort{\'e}s, and M.~Egerstedt, ``Nonsmooth barrier functions with applications to multi-robot systems,'' {\em IEEE control systems letters}, vol.~1, no.~2, pp.~310--315, 2017.

\bibitem{Ames.CLF.CDC12}
A.~D. Ames, K.~Galloway, and J.~W. Grizzle, ``Control lyapunov functions and hybrid zero dynamics,'' in {\em Conference on Decision and Control}, pp.~6837--6842, IEEE, 2012.

\bibitem{Lindemann.Dimarogonas.LCSS19}
L.~Lindemann and D.~V. Dimarogonas, ``Control barrier functions for signal temporal logic tasks,'' {\em IEEE Control Systems Letters}, vol.~3, no.~1, pp.~96--101, 2019.

\bibitem{Cameron.Culley.ICRA86}
S.~Cameron and R.~Culley, ``Determining the minimum translational distance between two convex polyhedra,'' in {\em International Conference on Robotics and Automation}, vol.~3, pp.~591--596, IEEE, 1986.

\bibitem{Ericson}
C.~Ericson, {\em Real-time collision detection}.
\newblock Crc Press, 2004.

\bibitem{LaValle.Planning.06}
S.~M. LaValle, {\em Planning Algorithms}.
\newblock Cambridge University Press, 2006.

\bibitem{4M}
M.~De~Berg, {\em Computational geometry: algorithms and applications}.
\newblock Springer Science \& Business Media, 2000.

\bibitem{Schulman.etal.IJRR14}
``Motion planning with sequential convex optimization and convex collision checking,'' {\em The International Journal of Robotics Research}, vol.~33, no.~9, pp.~1251--1270, 2014.

\bibitem{Zhang.OBCA.TCST20}
X.~Zhang, A.~Liniger, and F.~Borrelli, ``Optimization-based collision avoidance,'' {\em IEEE Trans. on Control Systems Technology}, vol.~29, no.~3, pp.~972--983, 2020.

\bibitem{Boyd.Vandenberghe.04}
S.~Boyd and L.~Vandenberghe, {\em Convex optimization}.
\newblock Cambridge University Press, 2004.

\bibitem{Gao.arXiv.2023}
W.~Gao, ``Efficient incremental penetration depth estimation between convex geometries,'' {\em arXiv preprint arXiv:2304.07357}, 2023.

\bibitem{Amos.Kolter.ICML17}
B.~Amos and J.~Z. Kolter, ``Optnet: Differentiable optimization as a layer in neural networks,'' in {\em International Conference on Machine Learning}, pp.~136--145, PMLR, 2017.

\bibitem{Clarke.1975}
F.~H. Clarke, ``Generalized gradients and applications,'' {\em Trans. of the American Mathematical Society}, vol.~205, pp.~247--262, 1975.

\bibitem{Dontchev.Rockafellar.09}
A.~L. Dontchev and R.~T. Rockafellar, {\em Implicit functions and solution mappings}, vol.~543.
\newblock Springer, 2009.

\bibitem{Sadraddini.Tedrake.CDC19}
S.~Sadraddini and R.~Tedrake, ``Linear encodings for polytope containment problems,'' 2019.

\end{thebibliography}

\end{document}